\renewcommand{\cite}[1]{\citep{#1}}
\definecolor{mydarkblue}{rgb}{0,0.08,0.45}
\renewcommand{\mapsto}{\to}
\newcommand{\Hcal}{\mathcal{H}}
\newcommand{\Hcalbf}{{\pmb\Hcal}}
\newcommand{\hbf}{{\bf h}}
\newcommand{\imY}{\ima\Ybf}
\newcommand{\hullimY}{\conv{\imY}}
\newcommand{\posterior}{\rho}
\newcommand{\prior}{\pi}
\newcommand{\xbf}{{\bf x}}
\newcommand{\xb}{\xbf}
\newcommand{\ex}{(\xbf,y)}
\newcommand{\exbf}{(\xbf,\ybf)}
\newcommand{\Y}{\Ybf}
\newcommand{\X}{{\bf X}}
\newcommand{\Xbf}{\X}
\newcommand{\ybf}{\mathbf{y}}
\newcommand{\cbf}{\mathbf{c}}
\newcommand{\Ybf}{{\bf Y}}
\newcommand{\I}{\mathrm{\bf I}}
\newcommand{\sign}{\operatorname{sign}}
\newcommand{\R}{\mathbb{R}}
\newcommand{\Risk}{\mathbf{R}}
\newcommand{\RP}{\Risk_{D}}
\newcommand{\BQ}{\BQs} 
\newcommand{\GQ}{{\bf G}_{\posterior}}
\newcommand{\BQs}{{\bf B}_{\posterior}}
\DeclareMathOperator*{\Prob}{\mathrm{\mathbf{Pr}}}
\DeclareMathOperator*{\Esp}{\mathrm{\mathbf{E}}}
\DeclareMathOperator*{\Var}{\mathrm{\mathbf{Var}}}
\DeclareMathOperator*{\argmin}{\mathrm{argmin}}
\DeclareMathOperator*{\argmax}{\mathrm{argmax}}
\DeclareMathOperator*{\ima}{\mathrm{Im}}
\DeclareMathOperator*{\cv}{\mathrm{conv}}
\newcommand{\expp}{\mathrm{e}}
\newcommand{\e}[1]{\expp^{#1}}
\newcommand{\eqdef}{=}
\newcommand{\esp}[1]{\Esp_{#1}}
\newcommand{\conv}[1]{\cv(#1)}
\newcommand{\espdevant}[1]{{\textstyle\Esp_{#1}\,}}
\newcommand{\LB}{\left[}
\newcommand{\RB}{\right]}
\newcommand{\gc}[2]{\Esp_{\hbf\sim \posterior}\hbf_{#2}(#1)}
\newcommand{\MQ}{\MQs}
\newcommand{\MQs}{{\bf M}_\posterior}
\newcommand{\MQprimes}{{\bf M}_{\posterior'}}
\newcommand{\Mbf}{{\bf M}}
\newcommand{\Q}{\mbox{\tiny$Q$}}
\newcommand{\MQP}{M_\posterior^D}
\newcommand{\OQ}{\OQs}
\newcommand{\OQtwo}{{\bf M}_{\posterior,\!\frac12}}
\newcommand{\OQbof}{{\bf M}_{\posterior,\!\mbox{\tiny$\frac{Q\!-\!1}{Q}$}}} 
\newcommand{\OQtwos}{{\bf M}_{\posterior,\!\frac12}}
\newcommand{\OQbofs}{{\bf M}_{\posterior,\!\mbox{\tiny$\frac{Q\!-\!1}{Q}$}}}
\newcommand{\OQs}{{\bf M}_{\posterior,\omega}}
\newcommand{\OQQ}{{\bf M}_{\posterior,\!\frac1Q}}
\newcommand{\mfracq}{\frac{-1}{\sqrt{2}}}
\newcommand{\fracq}{\frac{1}{\sqrt{2}}}
\newcommand{\Cbound}{\mbox{$\mathcal{C}$-bound}\xspace}
\newcommand{\Cbounds}{\mbox{$\mathcal{C}$-bounds}\xspace}
\newcommand{\KL}{{\rm KL}}
\newcommand{\kl}{{\rm kl}}
\newtheorem{cor}{Corollary}
\newtheorem{definition}{Definition}
\newtheorem{lemma}{Lemma}
\newtheorem{theorem}{Theorem}
\newtheorem{remark}{Remark}
\newcommand{\ver}{2}
\begin{document}
 
\title{On the Generalization of the C-Bound\\to Structured Output Ensemble Methods}
\author{Fran\c cois Laviolette$^1$ \and  Emilie Morvant$^2$ \and  Liva Ralaivola$^3$\and Jean-Francis Roy$^1$ \and 
$\mbox{}^1$ D\'epartement d'informatique et de g\'enie logiciel, Universit\'e Laval, Qu\'{e}bec, Canada
\and
$\mbox{}^2$ Laboratoire Hubert Curien, Universit\'e Jean Monnet, UMR CNRS 5516, Saint-Etienne, France
\and
$\mbox{}^3$ Aix-Marseille Univ., LIF-QARMA, CNRS, UMR 7279, F-13013, Marseille, France}
\maketitle

\begin{abstract}
	This paper generalizes an important result from the PAC-Bayesian literature for binary classification to the 
	case of ensemble methods for structured outputs. We prove a generic version of the \Cbound, an upper bound over the risk of models expressed as a weighted majority vote that is based on the first and second statistical moments of the vote's margin. This bound may advantageously $(i)$ be applied on more complex outputs such as multiclass labels and multilabel,
 and $(ii)$ allow to consider margin relaxations. These results open the way to develop new ensemble methods for structured output prediction with PAC-Bayesian guarantees.
\end{abstract}

\section{Introduction}
\label{sec:introduction}

It is well-known that learning predictive models capable of dealing with outputs that are richer than binary outputs 
({\it e.g.,} multiclass or multilabel)
and for which theoretical guarantees exist is still
a realm of intensive investigations.
From a practical standpoint, a lot of relaxations for learning with complex outputs have been devised. 
A common approach consists in decomposing the output space into ``simpler'' spaces so that the
learning problem at hand can be reduced to a few easier ({\it i.e.,} binary) learning tasks.
For instance, this is the idea spurred by the Error-Correcting Output Codes~\cite{dietterich1995solving} that makes possible to reduce multiclass or multilabel problems into binary classification tasks,{\it e.g.}, \cite{allwein2001reducing,mroueh2012multiclass,read2011classifier,TsoumakasV07,ICML2012Zhang_778}. In our work, we study the problem of complex output prediction by focusing on prediction functions that take the form of a weighted majority vote over a set of complex output classifiers (or voters).  Recall that \emph{ensemble methods} can all be seen as majority vote learning procedures \citep{Dietterich00,re2012ensemble}. 
Methods such as Bagging \citep{b-96}, Boosting \citep{schapire99} and Random Forests \citep{randomforests} are representative voting methods. \citet{ckm-2014} have proposed various ensemble methods for the structured output prediction framework.

Note also that majority votes are also central to the Bayesian approach \citep{gelman2004bayesian} with the notion of Bayesian model averaging \citep{Domingos00,HausslerKS94} and most of kernel-based predictors, such as the Support Vector Machines \cite{boser1992training,CortesV95} may be viewed as weighted majority votes as well: for binary classification, where the predicted class for some input $\xb$ is computed as the sign of $\sum_i \alpha_i\, \ybf_i\, k(\xb_i,\xb)$, each voter is simply given by $\ybf_ik(\xb_i,\cdot)$. 

From a theoretical standpoint, as far as binary classification is concerned, 
the notion of {\em margin} is often the crux to establish the generalization
ability of a majority vote predictor. For instance, still considering the binary case,
the margin of a majority vote realized on an example  is defined as
the difference between the total weight of the voters that predicted the correct class minus the total weight given to the incorrect one. In the PAC-Bayesian analysis, which is our working setup, one way to provide generalization bounds for a majority vote classifier is to relate it to a stochastic classifier, the \emph{Gibbs} classifier, whose risk is the
weighted risk of the individual voters involved in the majority vote.
Up to a linear transformation, the Gibbs risk is equivalent to the first statistical moment of the margin \cite{MinCq}.
This PAC-Bayesian analysis can be very accurate when the Gibbs risk is low, as in the situation where the voters having large weights are performing well as done by \citet{gllm-09}, \citet{ls-03}, together with \citet{mcallester2009generalization} for the structured output framework.
However, for ensemble methods, it is not unusual to be in the situation where, on the one hand, the voters achieve performances only slightly
above the chance level---which makes it impossible to find weights that induce a small Gibbs risk---and, on the other hand, the risk of the majority vote itself is very low. Hence, to better capture the accuracy of a majority vote in a PAC-Bayesian fashion, it is required to consider more than the Gibbs risk, {\it i.e.}, more than only the first statistical moment of the margin. This idea, which has been studied in the context of ensemble methods by~\cite{Blanchard04,randomforests}, has been revisited as the \Cbound by~\citet{Lacasse07} in the PAC-Bayesian setting.
This bound sheds light on an essential feature of weighted majority votes: how good the voters individually are is just as important as how correlated their predictions are. This has inspired a new ensemble method for binary classification with PAC-Bayesian generalization guarantees named MinCq \citep{MinCq}, whose performances are state-of-the-art. 
In the multiclass setting, there exists one PAC-Bayesian bound, 
but it is based on the confusion matrix of the Gibbs classifier~\citep{MorvantKR12}. 
\citet{kuznetsov2014multi} have recently proposed an improved Rademacher bound for multiclass prediction that is based on the notion of the multiclass margin of~\citet{randomforests} (Definition~\ref{def:notre_marge} in the present paper). 
However, as for the binary case, these bounds suffer from the same lack of tightness when the voters of the majority vote perform poorly. 

Here, we intend to generalize the \Cbound to more complex situations.
We first propose a formulation of the \Cbound for ensemble methods for complex output settings, that makes it possible for all the binary classification-based 
results of \citet{Lacasse07} and \citet{MinCq} 
to be generalized. 
Since for complex output prediction
the usual margin relies on the maximal deviation between the total weight given to the true class minus the  maximal total weight given to the ``runner-up'' incorrect one,
we base our theory 
on a general notion of margin allowing us to consider extensions of the usual margin.
Moreover, similarly as for binary classification \citep{Lacasse07,MinCq},  we derive a PAC-Bayesian generalization bound and show how we can estimate such \Cbounds from a sample.
In light of this general theoretical result, we propose two specializations 
suitable for multiclass classification with ensemble methods based on the true margin and on a relaxation that we call $\omega$-margin. We highlight the behavior of these \Cbounds through an empirical study. Finally, we instantiate it to multilabel prediction problems. 
We see these theoretical results as a first step towards the design of new and well-founded learning algorithms for  complex outputs.

This paper is organized as follows.
Section~\ref{sec:cborne} recalls the binary \Cbound, that is  generalized to a more general setting in Section~\ref{sec:cborne_generale}. 
We then specialize this bound to multiclass prediction in Section~\ref{sec:cborne_multi}, and to multilable prediction in Section~\ref{sec:cborne_multilabel}. 
We conclude in Section~\ref{sec:conclusion}.

\section{Ensemble Methods in Binary Classification}
\label{sec:cborne}
\label{sec:cborne_binaire}

For binary classification with majority vote-based ensemble methods, we often consider an
arbitrary input space $\Xbf$, an output space of two classes \mbox{$\Ybf  =  \{-1,+1\}$}, and a set $\Hcalbf\subseteq\{\hbf:\Xbf \to [-1,+1]\}$ of \emph{voters}. 
A voter can return
any  value in $[-1,+1]$, interpretable
as a level of confidence of the voter into its option which is $+1$ if the output is positive and $-1$ otherwise. A voter that always outputs values in $\{-1,+1\}$ is a \emph{(binary) classifier}.
The \emph{binary \mbox{$\posterior$-weighted} majority vote} $\BQs(\cdot)$ is the classifier returning
either of the two options that  has obtained
the larger weight in the vote, {\it i.e.}:
$$
\forall \xbf  \in   \X,\quad \BQs(\xbf)  \  \eqdef\   \argmax_{\ybf\in{\{-1,+1\}}}\ \esp{\hbf\sim \posterior} \left(\, \Big|\hbf(\xbf) \Big| \ \I\Big[\sign(\hbf(\xbf))=\ybf\Big]\, \right) 
\  = \    \sign\left[\esp{\hbf\sim \posterior} \hbf(\xbf)\right]\,,
$$
where $\I[a] = 1$ if predicate $a$ is true and $0$ otherwise.\\
Given a training set $S$ of observed data in which each example $(\xbf, \ybf) \in  S$ is drawn {\em i.i.d.} from a fixed yet unknown probability distribution $D$ on \mbox{$\X  \times  \{-1,+1\}$},
the learner aims at finding a weigthing distribution $\posterior$ over $\Hcal$ that induces a low-error majority vote; in other words, minimizing the \emph{true risk} 
of the \mbox{$\posterior$-weighted} majority vote  \mbox{$\RP(\BQs)  =  \esp{(\xbf,\ybf)\sim D} \I \left[\BQ(\xbf)\ne \ybf\right]\,$}  under the \mbox{0-1-loss} is aimed for.
One way towards this goal is to implement the Empirical Risk Minimization (ERM) principle, that is to minimize the {\em empirical risk}  of the majority vote $\Risk_{S}(\BQ)$ estimated on the sample $S$.
Unfortunately, a well-known issue to learn such weights 
is that  the direct minimization of 
$\Risk_{S}(\BQ)$  is an ${\cal NP}$-hard problem.
To overcome this, we may use relaxations of the risk, 
look for estimators or bounds of the true risk that are simultaneously valid for all possible distributions~$\posterior$ on $\Hcalbf$, and try to minimize them. 
In the PAC-Bayesian theory \cite{Mcallester99a},
such an estimator is given by the  \emph{Gibbs risk} $\Risk(\GQ)$ of a \mbox{$\posterior$-weighted} majority vote which is simply the \mbox{$\posterior$-average} risk of the voters. Indeed, it is well known that the risk of the majority vote is bounded by twice its Gibbs risk:
\begin{equation}
\label{eq:gibbs}
\RP(\BQ)\ \leq\ 2\, \RP(\GQ)\ =\ 2\, \espdevant{\hbf\sim\posterior}\RP(\hbf)\,.
\end{equation}
With this relation,  PAC-Bayesian theory indirectly gives generalization bounds for \mbox{$\posterior$-weighted} majority votes.
Unfortunately, even if they tightly bound the true risk $\Risk_{D}(\GQ)$ in terms of its empirical counterpart $\Risk_{S}(\GQ)$, this tightness might not carry over to the bound on the  majority vote. 

Note that even if there exist situations for which Inequality~\eqref{eq:gibbs} is an equality, ensemble methods (especially when the voters are weak) build 
on the idea that the risk of the  majority vote might be way below the average of its voters' risk. 
Indeed, it is well known that voting can dramatically improve performances when the ``community'' of voters
tends to compensate the individual errors. The ``classical'' PAC-Bayesian framework of \citet{Mcallester99a} does not allow to 
evaluate whether or not this compensation occurs. To overcome this problem, \citet{Lacasse07} proposed 
not only to take into account the mean of the errors of the associated Gibbs $\Risk_{D}(\GQ)$, but also its variance. They proposed a new bound, called the \emph{\Cbound}, that replaces the loose factor of 2 in Inequality~\eqref{eq:gibbs}.
They also extended the PAC-Bayesian theory in such a way that both the mean and the variance of the Gibbs classifier can be estimated from the training data simultaneously for all $\posterior$'s. \citet{MinCq} have reformulated this approach in terms of the first and second statistical moment of the \emph{margin} realized by the \mbox{$\posterior$-weighted} majority vote, which pertains to known results in non-PAC-Bayesian frameworks, {\it e.g.,} \cite{Blanchard04,randomforests}; the margin of a $\posterior$-weighted majority vote 
on an example $(\xbf,\ybf) \in \X \times \Y$ being:
\begin{equation*}
\MQ(\xbf,\ybf) \ \eqdef \ \ybf \Esp_{\hbf\sim \posterior} \hbf(\xbf)\,.
\end{equation*}
In terms of this margin, the \Cbound is defined as follows.

\begin{theorem}[\Cbound of \citet{MinCq}]
	\label{theo:Cbound_bin}	\label{theo:cborne_binaire}
	For every distribution $\posterior$ on a set of voters  $\Hcalbf$, and for every distribution $D$ on $\X  \times  \Ybf$, 
	if  $\ \Esp_{(\xbf,\ybf)} \ybf \Esp_{\hbf\sim\posterior} \hbf(\xbf)   >  0$, then we have: 	
        \begin{equation*}
		\RP(\BQ)\ \leq\ 1- \frac{\displaystyle \left(\Esp_{(\xbf,\ybf)\sim D} \ybf \Esp_{\hbf\sim\posterior} \hbf(\xbf)\right)^2}{\displaystyle\Esp_{(\xbf,\ybf)\sim D} \left(\ybf \Esp_{\hbf\sim\posterior} \hbf(\xbf)\right)^2}\,.
	\end{equation*}
\end{theorem}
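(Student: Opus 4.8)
The plan is to identify the event ``the majority vote errs'' with the non-positivity of the margin, and then to bound the probability of that event by a one-sided Chebyshev (Cantelli) inequality applied to the margin $\MQ(\xbf,\ybf)$ viewed as a real-valued random variable over $\X\times\Ybf$.

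First I would use the closed form $\BQ(\xbf)=\sign\bigl[\Esp_{\hbf\sim\posterior}\hbf(\xbf)\bigr]$ recalled in Section~\ref{sec:cborne_binaire}: it shows that $\BQ(\xbf)\ne\ybf$ holds exactly when $\ybf\,\Esp_{\hbf\sim\posterior}\hbf(\xbf)=\MQ(\xbf,\ybf)\le 0$, where a zero-margin example is counted as an error, consistently with $\BQ$ being defined to output an arbitrary one of the two options in case of a tie. Consequently $\RP(\BQ)=\Prob_{(\xbf,\ybf)\sim D}\bigl[\MQ(\xbf,\ybf)\le 0\bigr]$.

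Next, write $Z=\MQ(\xbf,\ybf)$ for $(\xbf,\ybf)\sim D$ and set $\mu=\Esp_{(\xbf,\ybf)\sim D} Z$, $\sigma^2=\Var_{(\xbf,\ybf)\sim D} Z$; the hypothesis of the theorem is exactly $\mu>0$. Cantelli's inequality gives $\Prob[\,Z-\mu\le -a\,]\le \sigma^2/(\sigma^2+a^2)$ for every $a>0$, and taking $a=\mu$ yields $\Prob[\,Z\le 0\,]\le \sigma^2/(\sigma^2+\mu^2)$. Since $\Esp_{(\xbf,\ybf)\sim D} Z^2=\sigma^2+\mu^2$, the right-hand side equals $1-\mu^2/\Esp_{(\xbf,\ybf)\sim D} Z^2$. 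Substituting $\mu=\Esp_{(\xbf,\ybf)\sim D}\ybf\Esp_{\hbf\sim\posterior}\hbf(\xbf)$ and $\Esp_{(\xbf,\ybf)\sim D} Z^2=\Esp_{(\xbf,\ybf)\sim D}\bigl(\ybf\Esp_{\hbf\sim\posterior}\hbf(\xbf)\bigr)^2$ then gives the announced inequality.

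The only genuine obstacle is the one-sided inequality itself, together with the bookkeeping for the tie set $\{\MQ=0\}$; once the error event has been identified with $\{Z\le 0\}$ the rest is a direct substitution. If a self-contained argument is preferred to quoting Cantelli, the same bound follows by applying Markov's inequality to the nonnegative variable $(\mu-Z+t)^2$ with a free parameter $t>0$ and optimizing over $t$, the minimum being attained at $t=\sigma^2/\mu$. As a sanity check, when $\sigma^2=0$ the margin is almost surely equal to $\mu>0$, so $\RP(\BQ)=0$ and the bound reads $\RP(\BQ)\le 0$, which holds with equality.
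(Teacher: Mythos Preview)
Your proof is correct and follows essentially the same approach as the paper: identify $\RP(\BQ)=\Prob_{(\xbf,\ybf)\sim D}\bigl[\MQ(\xbf,\ybf)\le 0\bigr]$, then apply the Cantelli--Chebyshev inequality to the random variable $Z=\MQ(\xbf,\ybf)$ with $a=\Esp Z$, and rewrite $\Var Z/(\Var Z+(\Esp Z)^2)$ as $1-(\Esp Z)^2/\Esp Z^2$. The extra remarks you add (handling of ties, the alternative derivation of Cantelli via Markov on $(\mu-Z+t)^2$, and the $\sigma^2=0$ sanity check) are sound but not needed for the argument.
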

\begin{proof}First,  note that 
$$\RP(\BQs)  =  \Prob_{\ex\sim D}   \left(\MQ\ex \leq 0\right)\,,$$
 then 
	to upper-bound $\RP(\BQs)$ it suffices to upper-bound \mbox{$\Prob_{\ex\sim D}   \left(\MQ\ex \leq 0\right)$}.
	Making use of the Cantelli-Chebyshev inequality that states that for any random variable $Z$,
	$$ \forall a >  0, \ \Prob \left(Z\leq \Esp_{}\left[Z\right] - a\right) \leq \frac{\Var Z}{\Var Z + a^2}\,,$$
we get the desired result if \mbox{$Z   =   \MQs(\xbf,\ybf)$}, with {$a   =   \Esp_{(\xbf,\ybf)\sim D} \MQs(\xbf,\ybf)$} combined with the definition of the variance. The constraint \mbox{$\ \Esp_{(\xbf,\ybf)} \ybf \Esp_{\hbf\sim\posterior} \hbf(\xbf)   >  0$} comes from the fact that this inequality is valid for $a > 0$.
\end{proof}

\noindent The \Cbound  involves both the  $\posterior$-weighted majority vote confidence via {$\Esp_{(\xbf,\ybf)} (\ybf \Esp_{\hbf\sim\posterior} \hbf(\xbf))$} 
and the correlation between the voters via { $\Esp_{(\xbf,\ybf)} (\ybf \Esp_{\hbf\sim\posterior} h(\xbf))^2$}. It
is known to be very precise. Minimizing its empirical counterpart then appears as a natural solution for learning a distribution $\posterior$ leading to a well-performing  binary \mbox{$\posterior$-weighted} majority vote.
Moreover, this strategy is justified by a PAC-Bayesian generalization bound over the \Cbound (similar to Theorem~\ref{thm:pacbound} of this paper but restricted to the case where $\Ybf = \{-1,+1\}$), and has given the MinCq algorithm~\cite{MinCq}.

As announced earlier, we here intend to generalize the \Cbound theory to more complex outputs than  binary outputs. Our contributions first consist in generalizing---in Section~\ref{sec:cborne_generale}---this important result to a broader ensemble method setting, along with PAC-Bayesian generalization bounds.

\section{A General Setting for Majority Votes over a Set of Complex Output Voters}
\label{sec:cborne_generale}

In this section, we propose a general setting in which one can consider classification under $\posterior$-weighted majority votes. 
We present a general definition of the margin and propose a  \Cbound designed for majority vote-based ensemble methods when we want to combine complex output classifiers. We also discuss how to estimate this bound from a set $S$ 
of
$m$ examples drawn {\it i.i.d.} from $D$.
To do so, we derive a PAC-Bayesian theorem that bounds the \emph{true} risk  of the $\posterior$-weighted majority vote $\RP(\BQs)$  by using the empirical estimation of our new \Cbound on the sample~$S$.

\subsection{A General \Cbound for Complex Output Prediction}

Given some input space $\Xbf$ and a \emph{finite} output space $\Ybf$,
we suppose
that there exists
a feature map \mbox{$Y : \Ybf \to  H_\Ybf$}, where $H_\Ybf$ is a vector space such as a Hilbert space. 
For the sake of clarity, we suppose that all the vectors $Y(\Ybf)$ are unit-norm vectors; 
most of the following results  remain true without this assumption but have to be stated in a  more complicated form.
Let $\imY$ be the image of  $\Ybf$ under $Y(\cdot)$, and $\hullimY$ ($\subseteq H_\Ybf$) its convex hull.
We consider a (non-necessarily finite) set of \emph{voters}~$\Hcalbf \subseteq \left\{\hbf : \Xbf \to  \hullimY\right\}$. A voter that always outputs values in $\imY$ is called a classifier. For every probability distribution $\posterior$ on $\Hcalbf$, we define the \emph{\mbox{$\posterior$-weighted} majority vote classifier} $\BQs$ such that: 
\begin{align}
\forall \xbf\in\X,\quad \BQs(\xbf) =\argmin_{\cbf\in \Ybf}  \left\| Y(\cbf) -  \esp{\hbf\sim \posterior} \hbf(\xbf)\right\|^2 
   = \argmax_{\cbf\in \Ybf}   \left\langle Y(\cbf)\,,\,  \esp{\hbf\sim \posterior} \hbf(\xbf) - \frac12  Y(\cbf)\right\rangle.\label{eq:maj_s}
\end{align}
As in  the binary classification case, the learning objective 
is to find a distribution $\posterior$ that minimizes the {\em true risk} $\RP(\BQs)$
of the $\posterior$-weighted majority vote given by
\begin{align*}
\RP(\BQs)\ &=\ \Esp_{(\xbf,\ybf)\sim D}\, \I\left[\BQs(\xbf)\ne \ybf\right]\,.
\end{align*}
Inspired by the margin definition of~\citet{randomforests}, we propose the following generalization of the binary margin, which measures
the confidence of a prediction as the deviation between the voting weights received by the correct class
and 
the largest voting weight received by any incorrect class.
\begin{definition}\label{def:notre_marge}
	For any example $(\xbf,\ybf)$ and any distribution~$\rho$  on a set of voters $\Hcalbf$, we define the \emph{margin $\MQs(\xbf,\ybf)$ of the \mbox{$\posterior$-weighted} majority vote} on $(\xbf,\ybf)$  as:
	\begin{align}
	\MQs(\xbf,\ybf) 
	\nonumber
	& \eqdef \left\langle Y(\ybf)\,,\,\Esp_{\hbf\sim \posterior} \hbf(\xbf) -\tfrac12 Y(\ybf) \right\rangle  - \max_{\substack{\cbf\in\Ybf\\\cbf\neq \ybf}} \left\langle Y(\cbf)\,,\,\Esp_{\hbf\sim \posterior} \hbf(\xbf) -\tfrac12 Y(\cbf) \right\rangle\\
	& \label{eq:def_marge}
	\eqdef \left\langle \Esp_{\hbf\sim \posterior} \hbf(\xbf)\,,\, Y(\ybf) \right\rangle 
	- \max_{\substack{\cbf\in\Ybf\\\cbf\neq \ybf}} \left\langle \Esp_{\hbf\sim \posterior} \hbf(\xbf) \,,\,Y(\cbf)  \right\rangle\,.
	\end{align}
\end{definition}
The second form of the margin readily comes from simple computations combined with our assumption that $\big\|Y(\ybf)\big\| = 1,\;\forall\ybf\in \Ybf$.

With this definition at hand, it is obvious that the \mbox{$\posterior$-weighted} majority vote errs on $\exbf$ if and only if the margin realized on $\exbf$ is negative. Therefore,
\begin{align}
\label{eq:risk_marge}
\RP(\BQ)\ = \Prob_{(\xbf,\ybf)\sim D}\,  \left(\MQs(\xbf,\ybf)\leq 0\right)\,.
\end{align}
\begin{remark}
	\label{rk:setting_bin}
	We may retrieve the binary notion of majority vote from our general framework in various ways.
	For example, one may take    $Y :  \{-1,+1\} \to  \mathbb{R}$ with $Y(+1) = 1$ and $Y(-1) = -1$ considering each 
	binary voters as a voter. 
	Note that the definition of the margin, and, henceforth, that of the \Cbound, will differ because the binary definition of the margin is linear in  $\Esp_{\hbf\sim\posterior}  \hbf(\xbf)$ whereas our margin definition is quadratic in that variable. To fall back to the exact same \Cbound from our framework, the squared norm should be replaced by the norm itself in Definition~\ref{def:notre_marge}. 
	We however purposely choose to work with the square of the norm
	as it renders the calculations easier.
\end{remark}

Using the proof technique of Theorem~\ref{theo:cborne_binaire}, 
we arrive at a general \Cbound.
\begin{theorem}[General \Cbound]
	\label{theo:cborne_generale}
	For every probability distribution $\posterior$ over a set of voters~$\Hcalbf$ from $\X$ to $\hullimY$, and for every distribution $D$ on $\X \times \Ybf$,
	if \mbox{$\ \Esp_{(\xbf,\ybf)\sim D} \MQs(\xbf,\ybf)>0\,,$} then we have:
        \begin{align*}
		\RP(\BQs) \ \leq \ 1-\frac{\displaystyle \left(\Esp_{(\xbf,\ybf)\sim D}  \MQs(\xbf,\ybf)\right)^2}{\displaystyle \Esp_{(\xbf,\ybf)\sim D} \left( \MQs(\xbf,\ybf)\right)^2}\,.
        \end{align*}
\end{theorem}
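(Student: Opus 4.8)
The plan is to follow verbatim the proof technique of Theorem~\ref{theo:cborne_binaire}, with the general margin $\MQs(\xbf,\ybf)$ of Definition~\ref{def:notre_marge} playing the role that $\ybf\,\Esp_{\hbf\sim\posterior}\hbf(\xbf)$ plays in the binary case. The structural fact that makes this transfer possible is already established: Equation~\eqref{eq:risk_marge} states that $\RP(\BQs) = \Prob_{(\xbf,\ybf)\sim D}\big(\MQs(\xbf,\ybf)\le 0\big)$, so it suffices to upper-bound this probability.

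First I would note that the random variable $Z \eqdef \MQs(\xbf,\ybf)$, for $(\xbf,\ybf)\sim D$, is bounded: $\Ybf$ is finite and every voter maps into $\hullimY$, whose elements have norm at most $1$, so by Cauchy--Schwarz each inner product appearing in Definition~\ref{def:notre_marge} lies in $[-1,1]$. Hence $\Esp Z$ and $\Esp Z^2$ both exist, and under the hypothesis $\Esp_{(\xbf,\ybf)\sim D}\MQs(\xbf,\ybf)>0$ we have $\Esp Z^2 \ge (\Esp Z)^2 > 0$; in particular the right-hand side of the claimed inequality is well defined.

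Next I would apply the Cantelli-Chebyshev inequality $\Prob\big(Z \le \Esp Z - a\big) \le \Var Z / (\Var Z + a^2)$, valid for all $a>0$, with the choice $a = \Esp Z = \Esp_{(\xbf,\ybf)\sim D}\MQs(\xbf,\ybf)$, which is admissible precisely because of the hypothesis. This yields $\Prob(Z \le 0) \le \Var Z / (\Var Z + (\Esp Z)^2)$. Expanding $\Var Z = \Esp Z^2 - (\Esp Z)^2$ rewrites the right-hand side as $1 - (\Esp Z)^2/\Esp Z^2$, which is exactly the asserted bound once we substitute $Z = \MQs(\xbf,\ybf)$ and invoke Equation~\eqref{eq:risk_marge}.

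There is no real obstacle here: the argument has the same shape as the binary one, since Equation~\eqref{eq:risk_marge} supplies the same ``risk equals probability of a non-positive margin'' identity that drives the binary proof. The only points deserving a word of care are the boundedness and hence integrability of $\MQs$ (immediate from the finiteness of $\Ybf$ and the unit-norm normalization of $Y(\cdot)$), the non-degeneracy $\Esp Z^2 > 0$ needed before dividing (a consequence of the hypothesis), and the fact that Cantelli-Chebyshev requires $a>0$, which is exactly what forces the hypothesis $\Esp_{(\xbf,\ybf)\sim D}\MQs(\xbf,\ybf)>0$ into the statement.
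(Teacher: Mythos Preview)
Your proposal is correct and follows essentially the same approach as the paper's proof: invoke Equation~\eqref{eq:risk_marge} to rewrite $\RP(\BQs)$ as the probability of a non-positive margin, then apply the Cantelli--Chebyshev inequality with $Z=\MQs(\xbf,\ybf)$ and $a=\Esp Z$, and expand the variance. Your additional remarks on boundedness and the non-degeneracy of $\Esp Z^2$ are more careful than the paper but not a different route.
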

\begin{proof} Thanks to Equation~\eqref{eq:risk_marge} the proof consists in bounding  \mbox{$\Prob_{\exbf}   \left(\MQ\exbf  \leq  0\right)$} with the Cantelli-Chebyshev inequality as done for Theorem~\ref{theo:cborne_binaire}.
\end{proof}
\begin{remark}[On the construction of the set of voters $\Hcalbf$]
	All our  results hold for both extreme cases of weak voters, as usual in ensemble methods, and that of more expressive/highly-performing voters.
	Typical instantiations of the former situation are encountered when making use of a kernel function $k:X \times  X\to \mathbb{R}$ that induces the set of voters $\Hcalbf=\left\{ \,k(\xbf,\cdot)\,Y(\ybf)\, |\, (\xbf,\ybf)\in S\right\}$; the situation also arises when a set of structured prediction functions learned with different hyperparameters are considered; as evidenced in Section~\ref{sec:expe} for the multiclass setting, 
	the weak voters may also be decision stumps or (more-or-less shallow) trees.
	Combining more expressive voters is a situation that may show up as a need to combine voters obtained from a primary mechanism. This is for instance the case in multiview learning~\cite{sun2013survey} when we want to combine models learned from several data descriptions---note that, the binary \Cbound has already shown its relevance in such a situation~\cite{sspr14}.
\end{remark}

\subsection{A PAC-Bayesian Theorem to Estimate the General \Cbound}
\label{sec:pacbayes}

In this section, we briefly discuss how to estimate the previous bound from a sample $S$ constituted by $m$ examples drawn {\it i.i.d.} from $D$. To reach this goal, we derive a PAC-Bayesian theorem that upper-bounds the \emph{true} risk  $\RP(\BQs)$ of the $\posterior$-weighted majority vote by using the empirical estimation of the \Cbound of Theorem~\ref{theo:cborne_generale} on the sample $S$.

\begin{theorem}\label{thm:pacbound}
	For any distribution $D$ on \mbox{$\X \times  \Ybf$}, for any set~$\Hcalbf$ of voters from $\X$ to $\hullimY$,  
	for any prior distribution $\prior$ on $\Hcalbf$ and any $\delta \in  (0,1]$, with a probability at least $1 - \delta$ over the choice of the $m$-sample $S \sim (D)^m$, for every posterior distribution
 $\posterior$ over $\Hcalbf$,  if \mbox{$\ \Esp_{(\xbf,\ybf)\sim D}   \MQs(\xbf,\ybf) > 0\,,$} we have:
	\begin{equation*}
		\RP(\BQ)  \,\, \leq \,\, 
		1-\frac{\Bigg( \, \max \left[0, \frac1m     \sum_{(\xbf,\ybf)\in S}     \MQs(\xbf,\ybf)   -  \sqrt{  \frac{2B}{m}\LB \KL(\posterior \| \prior)  +  \ln\frac{2\sqrt{m}}{\delta/2}\RB}\right] \, \Bigg)^2 }{ \min \left[1, \frac1m      \sum_{(\xbf,\ybf)\in S}       \left( \MQs(\xbf,\ybf)\right)^2     +  \sqrt{  \frac{2B^2}{m}  \LB 2\KL(\posterior || \prior)  +  \ln\frac{2\sqrt{m}}{\delta/2}\RB}\right] } \,,
	\end{equation*}
	where $B \in\, (0,2]$  bounds the absolute value of the margin $|\MQ(\xbf,\ybf)|$ for all $(\xbf,\ybf)$, 
	and \mbox{\small$\KL(\posterior\|\prior)  =  \esp{\hbf\sim \posterior}\ln\frac{\posterior(\hbf)}{\prior(\hbf)}$} is the Kullback-Leibler divergence between $\posterior$ and $\prior$. 
\end{theorem}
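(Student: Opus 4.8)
\noindent The plan is to mirror, at the level of the general \Cbound of Theorem~\ref{theo:cborne_generale}, the PAC-Bayesian analysis that underlies the binary \Cbound. Write $\momentone=\Esp_{(\xbf,\ybf)\sim D}\MQs(\xbf,\ybf)$ and $\momenttwo=\Esp_{(\xbf,\ybf)\sim D}\big(\MQs(\xbf,\ybf)\big)^2$; Theorem~\ref{theo:cborne_generale} gives $\RP(\BQs)\le 1-\momentone^2/\momenttwo$ under the hypothesis $\momentone>0$. Since $(a,b)\mapsto 1-a^2/b$ is non-increasing in $a>0$ and non-decreasing in $b>0$, it is enough to prove, each with probability at least $1-\tfrac{\delta}{2}$ over $S\sim D^m$ and simultaneously for all $\posterior$, that $\momentone$ is lower bounded by $\max\!\big[0,\ \tfrac1m\sum_{(\xbf,\ybf)\in S}\MQs(\xbf,\ybf)-\varepsilon_1(\posterior)\big]$ and that $\momenttwo$ is upper bounded by $\min\!\big[1,\ \tfrac1m\sum_{(\xbf,\ybf)\in S}(\MQs(\xbf,\ybf))^2+\varepsilon_2(\posterior)\big]$, with $\varepsilon_1,\varepsilon_2$ of the announced form; a union bound over the two events and a substitution into the \Cbound via the monotonicity above then conclude. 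The clamps $\max[0,\cdot]$ and $\min[1,\cdot]$ are harmless, as they merely replace a confidence limit by the a priori interval containing the corresponding true moment.

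\noindent For the two one-sided deviation bounds I would use the classical PAC-Bayes-$\kl$ theorem (in the form carrying the $\ln\tfrac{2\sqrt m}{\delta}$ term), relaxed through Pinsker's inequality $\kl(p\|q)\ge 2(p-q)^2$ and rescaled to the relevant range — this is exactly what produces the $\sqrt{\tfrac{2B}{m}[\cdots]}$ shape, with $B$ bounding $|\MQs|$. In the binary case $\momentone$ and $\momenttwo$ are, up to affine transformations, a Gibbs risk and a pairwise-Gibbs quantity, i.e.\ averages over $\posterior$ and over $\posterior\otimes\posterior$ respectively, so the theorem applies directly and the factor $2$ multiplying $\KL(\posterior\|\prior)$ in $\varepsilon_2$ comes from $\KL(\posterior\otimes\posterior\,\|\,\prior\otimes\prior)=2\,\KL(\posterior\|\prior)$. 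The genuinely new point here is that the $\max$ over incorrect labels makes $\MQs(\xbf,\ybf)$ \emph{not} an average over $\hbf\sim\posterior$; to cope with this I would linearise label by label, using Definition~\ref{def:notre_marge} in the form
$$\MQs(\xbf,\ybf)\ =\ \min_{\cbf\in\Ybf,\ \cbf\neq\ybf}\ \Esp_{\hbf\sim\posterior}\big\langle\hbf(\xbf),\,Y(\ybf)-Y(\cbf)\big\rangle ,$$
so that for each fixed runner-up $\cbf$ the inner quantity \emph{is} a $\posterior$-average of a fixed bounded function of $\hbf$ to which PAC-Bayes applies; likewise $(\MQs(\xbf,\ybf))^2$ is a minimum over pairs of incorrect labels of bilinear $\posterior\otimes\posterior$-averages of the form $\Esp_{(\hbf,\hbf')\sim\posterior\otimes\posterior}\langle\hbf(\xbf),Y(\ybf)-Y(\cbf)\rangle\,\langle\hbf'(\xbf),Y(\ybf)-Y(\cbf')\rangle$. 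One then has to reassemble these per-label estimates into bounds on $\momentone=\Esp_D\MQs(\xbf,\ybf)$ and $\momenttwo=\Esp_D(\MQs(\xbf,\ybf))^2$.

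\noindent The reassembly is the step I expect to be the main obstacle: $\momentone$ is the population average of the \emph{pointwise} minimum over $\cbf\neq\ybf$, and the minimising label depends on $\posterior$ and on $\xbf$, so it cannot be fixed before quantifying over $\posterior$ and a plain union bound over per-example label choices is unavailable. Making this rigorous — while keeping the announced form, in particular with $B$ (not a crude constant) in $\varepsilon_1$, $B^2$ in $\varepsilon_2$, the factor $2$ on $\KL$ for $\varepsilon_2$, and no extra dependence on $|\Ybf|$ — requires exploiting that $\Ybf$ is finite and that the PAC-Bayesian deviation term has the same shape for every candidate label (and every pair of candidate labels, for $\momenttwo$), so that taking the pointwise minimum does not enlarge the error term; the ``minimum over pairs'' bookkeeping behind $\varepsilon_2$ is the most delicate part. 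Everything else — the precise constants coming out of Pinsker's inequality and of the rescalings, and the final $\tfrac{\delta}{2}+\tfrac{\delta}{2}$ union bound — is routine.
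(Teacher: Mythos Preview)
Your high-level architecture is exactly the paper's: prove a PAC-Bayesian lower bound on $\momentone$ and an upper bound on $\momenttwo$ (each holding with probability $1-\tfrac{\delta}{2}$, uniformly in $\posterior$), take the union, and substitute into Theorem~\ref{theo:cborne_generale} via the monotonicity of $(a,b)\mapsto 1-a^2/b$. The clamps, the Pinsker relaxation producing the $\sqrt{2B/m\,[\cdots]}$ shape, and the factor $2$ on $\KL$ in the second-moment term (from $\KL(\posterior\otimes\posterior\,\|\,\prior\otimes\prior)=2\,\KL(\posterior\|\prior)$) are all exactly what the paper does in its Appendix (Theorems~\ref{thm:borne1} and~\ref{thm:borne2}, the latter relying on a change-of-measure lemma for pairs of voters).

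Where you diverge is in the handling of the $\max$. The paper does \emph{not} linearise label by label. It simply postulates, for each $D'\in\{S,D\}$, a random variable $\Mbf_\hbf^{D'}$ (resp.\ $\Mbf_{\hbf,\hbf'}^{D'}$) with $\Esp_{\hbf\sim\posterior}\Mbf_\hbf^{D'}=\Esp_{(\xbf,\ybf)\sim D'}\MQs(\xbf,\ybf)$ (resp.\ $\Esp_{(\hbf,\hbf')\sim\posterior^2}\Mbf_{\hbf,\hbf'}^{D'}=\Esp_{(\xbf,\ybf)\sim D'}(\MQs(\xbf,\ybf))^2$) and then runs the standard Markov\,/\,Pinsker\,/\,Maurer\,/\,change-of-measure\,/\,Jensen chain on these objects directly. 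In other words, the paper abstracts away the non-linearity you worry about by declaring the required decomposition to exist, without specifying what $\Mbf_\hbf^{D'}$ is or why it is a function of $\hbf$ alone (which is precisely what the change-of-measure step requires). Your concern is therefore legitimate---you are being more scrupulous here than the paper itself---but your per-label linearisation and ``reassembly over minima'' is not the route the paper takes, and it introduces exactly the difficulty you flag. If the aim is to reproduce the paper's argument, drop that machinery: adopt the abstract $\Mbf_\hbf^{D'}$, $\Mbf_{\hbf,\hbf'}^{D'}$ notation and carry out the five-step pipeline; everything then goes through verbatim. (Note that for the $\omega$-margin of Definition~\ref{def:omega_marge_generique}, which \emph{is} affine in $\Esp_{\hbf\sim\posterior}\hbf(\xbf)$, the postulated decomposition is genuine and no subtlety arises.)
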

\begin{proof}
	First since \mbox{\small$\|Y(\ybf)\| = 1,\ \forall \ybf \in \Ybf$}, and \mbox{\small$\hbf(\xbf) \in \hullimY,\ \forall \xbf \in \X$}, then
	\mbox{\small$\left\langle \Esp_{\hbf\sim \posterior} \hbf(\xbf) \,,\,Y(\cbf)  \right\rangle$} takes its value between $-1$ and $+1$. It follows from Equation~\eqref{eq:def_marge} that $B  =  2$ is always an upper bound of $|\MQ(\xbf,\ybf)|$.
	The bound is obtained by deriving a PAC-Bayesian lower bound on \mbox{\small$\Esp_{(\xbf, \ybf) \sim D}  \MQs(\xbf, \ybf)$} and a PAC-Bayesian upper bound on \mbox{\small$\Esp_{(\xbf, \ybf) \sim D} \left(\MQs(\xbf, \ybf)\right)^2$}. We then use the union bound argument to make these two bounds simultaneously valid, and the result follows from  Theorem~\ref{theo:cborne_generale}. These two bounds and their respective proof are provided in Appendix~\ref{sec:supp}, as Theorems~\ref{thm:borne1} and~\ref{thm:borne2}.
\end{proof}
	
Unlike to classical PAC-Bayesian bounds and especially those provided for structured output prediction by \citet{mcallester2009generalization}, our theorem has the advantage to directly upper-bound the risk of the $\posterior$-weighted majority vote thanks to the \Cbound of Theorem~\ref{theo:cborne_generale}. Moreover, it allows us to deal with either the general notion of margin, or margin's surrogate as illustrated in the following.

\subsection{A Surrogate for the Margin}
\label{sec:omega_borne}
The general notion of margin can be hard to exploit in general
because of its second term relying on a $\max$. 
We propose  to define a simpler surrogate of the margin, by replacing the second term in Equation~\eqref{eq:def_marge} by a threshold $\omega \in  [0,1]$.
\begin{definition}[The $\omega$-margin]
	\label{def:omega_marge_generique}
	For any  example
	$(\xbf,\ybf) \in\X \times \Y$ and for any  distribution $\posterior$ on $\Hcalbf$, we define the \emph{\mbox{$\omega$-\emph{margin}} $\OQ(\xbf, \ybf)$ of the \mbox{$\posterior$-weighted} majority vote} realized on $\exbf$ as\\
	\begin{align*}
		\OQ(\xbf, \ybf)\ \eqdef\ \left\langle \Esp_{\hbf\sim \posterior} \hbf(\xbf)\,,\,\Y(\ybf) \right\rangle\, - \, \omega\,.
		\end{align*}
\end{definition}
\noindent{}Trivially, the \mbox{$\omega$-margin} always upper-bounds the margin when $\omega  =  0$. 
Moreover, since  \mbox{\small$\forall Y(\ybf)\in\ima\Ybf, \|Y(\ybf)\| = 1$}, and \mbox{\small$\forall \xbf \in  \X,\ \Esp_{\hbf\sim \posterior}   \hbf(\xbf)  \in  \hullimY$}, then
the  \mbox{$\omega$-margin} always lower-bounds the margin when $\omega  =  1$. We will see that in the multiclass setting it is also the case for $\omega  =  \frac12$.
When the \mbox{$\omega$-margin} lower-bounds the margin, we can replace it in the \Cbound in the following way:
\begin{equation}
\label{eq:la_C_omega_pas_tjrs_borne}
{\cal C}(\OQ)\ \eqdef \ 1-\frac{\displaystyle\left(\Esp_{(\xbf,\ybf)\sim D}  \OQ\exbf\right)^2}{\displaystyle\Esp_{(\xbf,\ybf)\sim D} \left( \OQ\exbf\right)^2}\,.
\end{equation}
Indeed, in this situation we have:
$$\displaystyle \RP(\BQs)\ =\     \Pr_{(\xbf,\ybf)\sim D}    \left(\MQ(\xbf,\ybf)\leq 0\right)\ \leq\     \Pr_{(\xbf,\ybf)\sim D}    \left(\OQ(\xbf,\ybf)\leq 0\right).$$
Therefore, the proof process of Theorem~\ref{theo:cborne_generale} applies if \mbox{\small $\displaystyle \Esp_{(\xbf,\ybf)\sim D}\OQ\exbf>0$.}

Note that, even for values of $\omega$ for which ${\cal C}(\OQ)$ does not give rise to a valid  upper bound of $\RP(\BQs)$, it is still interesting to calculate it as there are some values that can be very good estimators of $\RP(\BQs)$ simultaneously for many different values of $\rho$. If so, one can capture the behavior of the ensemble method this way. We provide some evidence about this in Section~\ref{sec:expe}.

We now theoretically and empirically illustrate these results by addressing the multiclass classification issue from our previous general \Cbound perspective.

\section{Specializations of the \Cbound to Multiclass Prediction}
\label{sec:cborne_multi}
\label{sec:multiclasse}

\subsection{From Multiclass Margins to \Cbounds} 
\begin{figure}[t]
	\centering
        \includegraphics[width=0.6\textwidth]{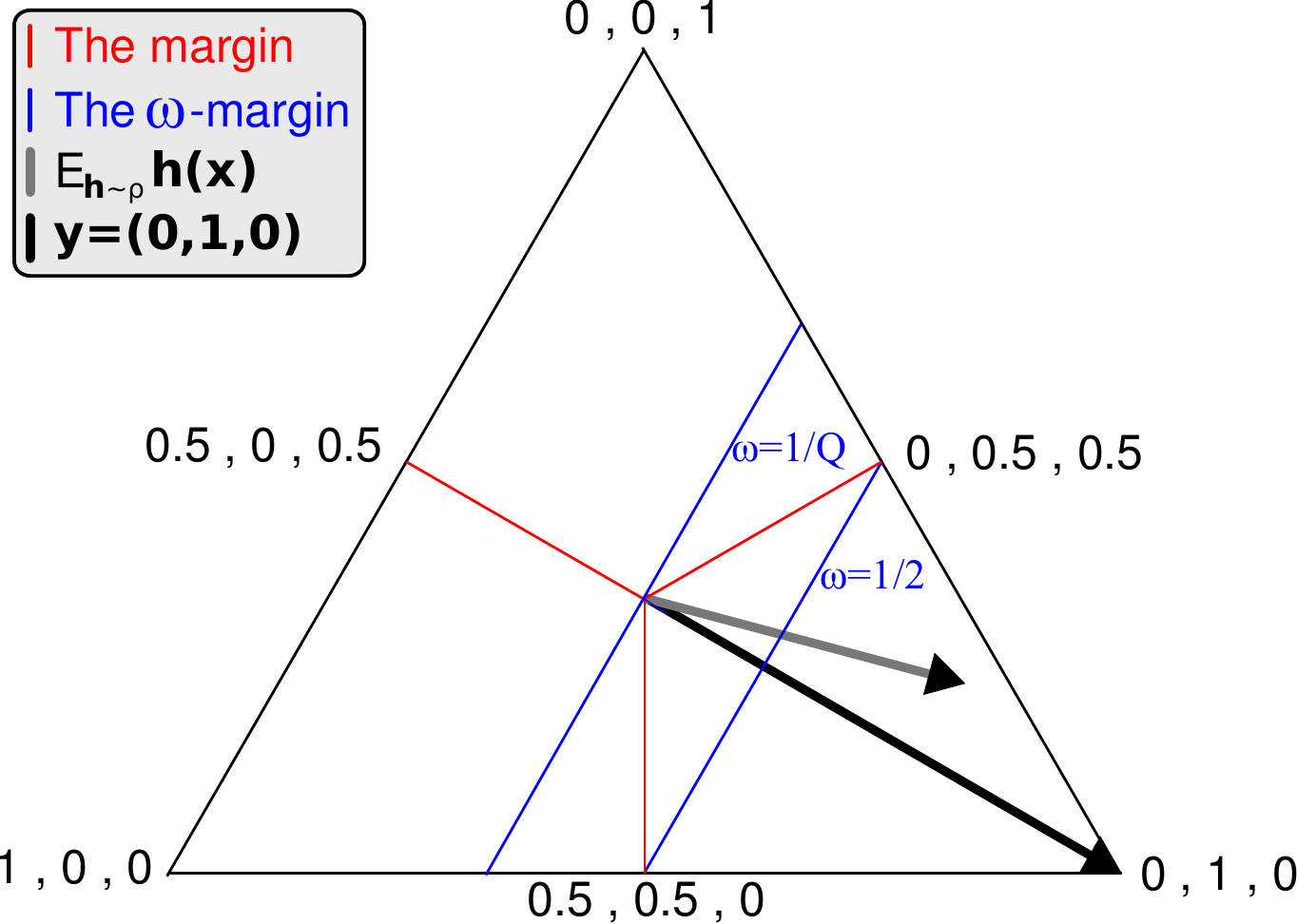}
	\caption{\label{fig:multiclass}
	Representation of the multiclass margins and the vote applied on $(\xb,\ybf)$ in the barycentric coordinate system defined by $\hullimY$ when \mbox{$\Ybf   =   \{1,2,3\}$} and the true $\ybf$
	is $2$, {\it i.e.}, $(0,1,0)^\top$. We have \mbox{$Y(1)  =  (1,0,0)^\top$}, \mbox{$Y(2)  =  (0,1,0)^\top$}, and \mbox{$Y(3)  =  (0,0,1)^\top$}.  Each line is the decision boundary of a margin: the hyperplane where lies each example with a margin equals to $0$. A vote correctly classifies an example if it lies on the same side of the hyperplane than the correct class.}
\end{figure}
The input space at hand still is \mbox{$\X$}, but the output space \mbox{$\Ybf  =  \{1,\ldots,Q\}$} is now made of a finite number of classes (or categories) $Q \geq  2$. 
We define the output feature map $Y(\cdot)$ such that the image of $\Ybf$ is \mbox{$\imY = \{0,1\}^{\Q}$}.
More precisely, the  image of a class \mbox{$\ybf \in \Ybf$} under $Y(\cdot)$ is the canonical \mbox{$Q$-dimensional} vector $(0, \ldots, 1,\ldots 0)^{\top}$ whose only nonzero entry is a $1$ at its $\ybf$-th position.
The set~$\Hcalbf$ is a set of multiclass voters $\hbf$ from $\X$ to $\hullimY$.
We recall that given a prior distribution $\prior$ over $\Hcalbf$ and an {\it i.i.d.} $m$-sample $S$ (drawn from $D$), the goal of the PAC-Bayesian theory is to estimate the prediction ability of the \mbox{$\posterior$-weighted}
majority vote $\BQs(\cdot)$ of Equation~\eqref{eq:maj_s}. In this multiclass setting, since for each  $\cbf \in \Ybf$ only the \mbox{$\cbf$-th} coordinate of $Y(\cbf)$ equals to $1$,  the definitions of the majority vote classifier and the margin  can respectively be rewritten as: 
\begin{align*}
\BQs(\xbf) \ &=\ \argmax_{\cbf\in\Ybf}  \gc{\xbf}{\cbf}\,,\\
text{and}\ \ \
\MQ(\xbf,\ybf)\   &= \   \gc{\xbf}{\ybf} -    \max_{\cbf\in\Ybf,\cbf\neq \ybf}\, \gc{\xbf}{\cbf}\,,
\end{align*}
where $\hbf_\cbf(\xbf)$ is the $\cbf$-th coordinate of $\hbf(\xbf)$.
The following theorem relates $\RP(\BQs)$ 
and the \mbox{$\omega$-margin} associated to the distribution
$\posterior$ over $\Hcalbf$.
\begin{theorem}
	\label{theo:link}
	Let $Q\geq 2$ be the number of classes. For every distribution $D$ over \mbox{$\X \times  \Ybf$} and for every distribution $\posterior$ over a set of multiclass voters $\Hcalbf$,  we have:
	\begin{equation*}  \Prob_{\exbf\sim D}  \left(\OQQ\exbf  \leq  0\right)\  \leq\   \RP(\BQs) \   \leq \  \Prob_{\exbf\sim D}  \left(\OQtwos\exbf  \leq  0\right) .
	\end{equation*}
\end{theorem}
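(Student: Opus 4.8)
The key structural fact I would use is that, in the multiclass encoding, $\Esp_{\hbf\sim\posterior}\hbf(\xbf)$ always belongs to the probability simplex: since $\hullimY$ is the convex hull of the canonical basis $\{Y(1),\dots,Y(Q)\}$, the vector $\Esp_{\hbf\sim\posterior}\hbf(\xbf)\in\hullimY$ has nonnegative coordinates $\gc{\xbf}{\cbf}\ge 0$ with $\sum_{\cbf=1}^{Q}\gc{\xbf}{\cbf}=1$. Recalling from~\eqref{eq:risk_marge} that $\RP(\BQs)=\Prob_{\exbf\sim D}\big(\MQs(\xbf,\ybf)\le 0\big)$, and that in this setting $\OQ(\xbf,\ybf)=\gc{\xbf}{\ybf}-\omega$, the whole statement reduces — by monotonicity of $\Prob_{\exbf\sim D}$ — to the two deterministic implications
\[
\gc{\xbf}{\ybf}\le\tfrac1Q \ \Longrightarrow\ \MQs(\xbf,\ybf)\le 0
\qquad\text{and}\qquad
\MQs(\xbf,\ybf)\le 0 \ \Longrightarrow\ \gc{\xbf}{\ybf}\le\tfrac12 ,
\]
valid for every $(\xbf,\ybf)$: the first gives $\Prob_{\exbf\sim D}(\OQQ\exbf\le0)\le\RP(\BQs)$ and the second gives $\RP(\BQs)\le\Prob_{\exbf\sim D}(\OQtwos\exbf\le0)$.

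I would prove both implications by contraposition. For the first: if $\MQs(\xbf,\ybf)>0$ then $\gc{\xbf}{\ybf}>\gc{\xbf}{\cbf}$ for every $\cbf\neq\ybf$, so $\gc{\xbf}{\ybf}$ is a strict maximum among $Q$ nonnegative numbers summing to $1$; summing the $Q-1\ge1$ strict inequalities yields $1=\sum_{\cbf}\gc{\xbf}{\cbf}<Q\,\gc{\xbf}{\ybf}$, i.e. $\gc{\xbf}{\ybf}>\tfrac1Q$. For the second: if $\gc{\xbf}{\ybf}>\tfrac12$, then for each $\cbf\neq\ybf$ we have $\gc{\xbf}{\cbf}\le\sum_{\cbf'\neq\ybf}\gc{\xbf}{\cbf'}=1-\gc{\xbf}{\ybf}<\tfrac12<\gc{\xbf}{\ybf}$, hence $\max_{\cbf\neq\ybf}\gc{\xbf}{\cbf}<\gc{\xbf}{\ybf}$ and $\MQs(\xbf,\ybf)>0$. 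Both steps use only $Q\ge2$ and the two simplex constraints, so they are short once the simplex observation is in place.

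The only point requiring care is not a computation but a pitfall to avoid: the $\omega$-margins are \emph{not} pointwise comparable to the true margin — $\OQQ(\xbf,\ybf)\ge\MQs(\xbf,\ybf)$ can fail when the vote mass concentrates on the correct class, and $\OQtwos(\xbf,\ybf)\le\MQs(\xbf,\ybf)$ can fail when most of the mass sits on a single incorrect class — so the comparison must be carried out only at the level of the sign events, through the contrapositive implications above rather than through inequalities between the margin values themselves. Granting those implications, chaining the two resulting set inclusions with~\eqref{eq:risk_marge} and the definition of $\OQ$ yields the announced double inequality, and the ordering $\tfrac1Q\le\tfrac12$ makes the two outer bounds mutually consistent as a sanity check.
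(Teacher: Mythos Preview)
Your proof is correct and follows essentially the same route as the paper: both hinge on the simplex identity $\sum_{\cbf}\gc{\xbf}{\cbf}=1$ to compare $\gc{\xbf}{\ybf}$ with $\tfrac1Q$ via the max-versus-average inequality, and both handle the upper bound by noting that $\gc{\xbf}{\ybf}>\tfrac12$ forces a correct prediction. The only cosmetic difference is that you phrase the argument as two pointwise contrapositive implications and then lift to probabilities, whereas the paper manipulates the probability events directly; the underlying inequalities are identical.
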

\begin{proof}
	First, let us prove the left-hand side inequality.
	\begin{small}
		\allowdisplaybreaks[1]
		\begin{align*}
		\RP(\BQs)\  &=\ \Prob_{(\xbf,\ybf)\sim D}  \left(\MQ\exbf \leq 0\right)\\
		&= \ \Prob_{(\xbf,\ybf)\sim D}  \left( \gc{\xbf}{\ybf} \leq \max_{\cbf\in\Ybf, \cbf\ne \ybf} \gc{\xbf}{\cbf}  \right)\\
		&\geq  \  \Prob_{(\xbf,\ybf)\sim D} \left( \gc{\xbf}{\ybf} \leq \frac{1}{Q-1}\sum_{\cbf=1, \cbf\neq \ybf}^{\Q} \,\gc{\xbf}{\cbf}  \right)\\
		&= \  \Prob_{(\xbf,\ybf)\sim D}  \left( \gc{\xbf}{\ybf} \leq \frac{1}{Q - 1}\left[1 - \gc{\xbf}{\ybf}\right]  \right)\\
		&= \  \Prob_{(\xbf,\ybf)\sim D} \left(  \gc{\xbf}{\ybf}  - \frac{1}{Q} \leq 0\right)\\
                & = \  \Prob_{(\xbf,\ybf)\sim D}  \left( \OQQ\exbf \leq 0\right)\,.
		\end{align*}%
	\end{small}%
	The right-hand side inequality is  verified by observing that $\BQs(\cdot)$ necessarily makes a correct prediction if the weight $\gc{\xbf}{\ybf}$  given to the correct $\ybf$ is higher than $\frac12$.
\end{proof}
Consequently, as illustrated in Figure~\ref{fig:multiclass}, the \mbox{$\omega$-margin} of the points that lie between the \mbox{$\frac{1}{\Q}$-margin} and the \mbox{$\frac12$-margin} can be negative or positive according to 
$\omega$. 
We thus have the following bound.
\begin{cor}[$\omega$-margin multiclass \Cbound]
	\label{cor:marge_omega}
	For every probability distribution $\posterior$ on a set of multiclass voters $\Hcalbf$, and for every distribution $D$ on \mbox{$\X \times  \Ybf$}, if  \mbox{$\,\esp{\exbf\sim D} \OQtwo\exbf > 0$}, then we have:
\begin{align*}
		\RP(\BQs) \  \leq\ {\cal C}(\OQtwo)\ =\  1 - \frac{\left(\displaystyle\esp{\exbf\sim D} \OQtwo\exbf\right)^2}{\displaystyle\esp{\ex\sim D} \left(\OQtwo\exbf\right)^2}\,,
\end{align*}
	where ${\cal C}(\cdot)$ is the function involved in the \mbox{$\omega$-margin}-based \Cbound  (Equation~\eqref{eq:la_C_omega_pas_tjrs_borne}).
\end{cor}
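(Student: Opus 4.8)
The plan is to chain the right-hand inequality of Theorem~\ref{theo:link} with the Cantelli--Chebyshev estimate already exploited in the proof of Theorem~\ref{theo:cborne_generale}, but now applied to the surrogate random variable $\OQtwo(\xbf,\ybf)$ in place of the true margin $\MQs(\xbf,\ybf)$. Concretely, Theorem~\ref{theo:link} yields $\RP(\BQs) \le \Prob_{(\xbf,\ybf)\sim D}\!\left(\OQtwos(\xbf,\ybf)\le 0\right)$, so it suffices to upper-bound this last probability by ${\cal C}(\OQtwo)$.

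First I would set $Z = \OQtwo(\xbf,\ybf)$ with $(\xbf,\ybf)\sim D$ and $a = \Esp_{(\xbf,\ybf)\sim D} Z$, which is strictly positive precisely by the hypothesis $\esp{\exbf\sim D}\OQtwo\exbf>0$ of the corollary; this is exactly the condition required to invoke Cantelli--Chebyshev (which needs $a>0$). Applying it gives $\Prob(Z\le 0) = \Prob\!\left(Z \le \Esp Z - a\right) \le \frac{\Var Z}{\Var Z + a^2}$. Then I would rewrite the right-hand side using $\Var Z = \Esp[Z^2] - (\Esp Z)^2$, obtaining $\frac{\Var Z}{\Var Z + a^2} = 1 - \frac{a^2}{\Var Z + a^2} = 1 - \frac{(\Esp Z)^2}{\Esp[Z^2]}$, which is by definition ${\cal C}(\OQtwo)$ as in Equation~\eqref{eq:la_C_omega_pas_tjrs_borne}. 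Chaining the two displays closes the argument.

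There is no genuine obstacle here; the routine points to verify are only that the positivity hypothesis is exactly what Cantelli--Chebyshev demands, and that the denominator $\Esp[Z^2]$ is nonzero, which is immediate since $\Esp[Z^2] \ge (\Esp Z)^2 = a^2 > 0$. The substantive content has already been front-loaded into Theorem~\ref{theo:link}: its right-hand inequality---a correct prediction is guaranteed as soon as the correct class receives weight exceeding $\tfrac12$---is precisely what legitimizes replacing the $\max$-based margin $\MQs$ by the simpler linear surrogate $\OQtwo$ before the concentration step, and hence what makes ${\cal C}(\OQtwo)$ a valid upper bound rather than merely an estimator.
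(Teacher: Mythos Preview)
Your proposal is correct and follows exactly the route the paper intends: invoke the right-hand inequality of Theorem~\ref{theo:link} to pass from $\RP(\BQs)$ to $\Prob_{\exbf\sim D}\!\left(\OQtwos\exbf\le 0\right)$, then apply Cantelli--Chebyshev to the surrogate $\OQtwo$ under the positivity hypothesis, precisely as in the proof of Theorem~\ref{theo:cborne_generale} (and as outlined in Section~\ref{sec:omega_borne}). There is nothing to add.
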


The region of indecision when \mbox{$\omega \in${\footnotesize$\,[\frac{1}{\Q},\frac12]$}} implies there is possibly some value of $\omega$ to be chosen carefully to provide a good estimator of the true margin. If this is so,  we can then consider to  make use of  ${\cal C}(\OQ)$ for that particular value of $\omega$ to improve the analysis of the majority vote's behavior. Obviously, in such a situation, ${\cal C}(\OQ)$ is no longer a bound on $\RP(\BQs)$.
However, due to the linearity of the \mbox{$\omega$-margin}, this could open the way to a generalization of the algorithm MinCq~\cite{MinCq} to the multiclass setting.

\subsection{Experimental Evaluation of the Bounds}
\label{sec:expe}

\begin{figure}[t]
	\centering
        \includegraphics[width=\linewidth]{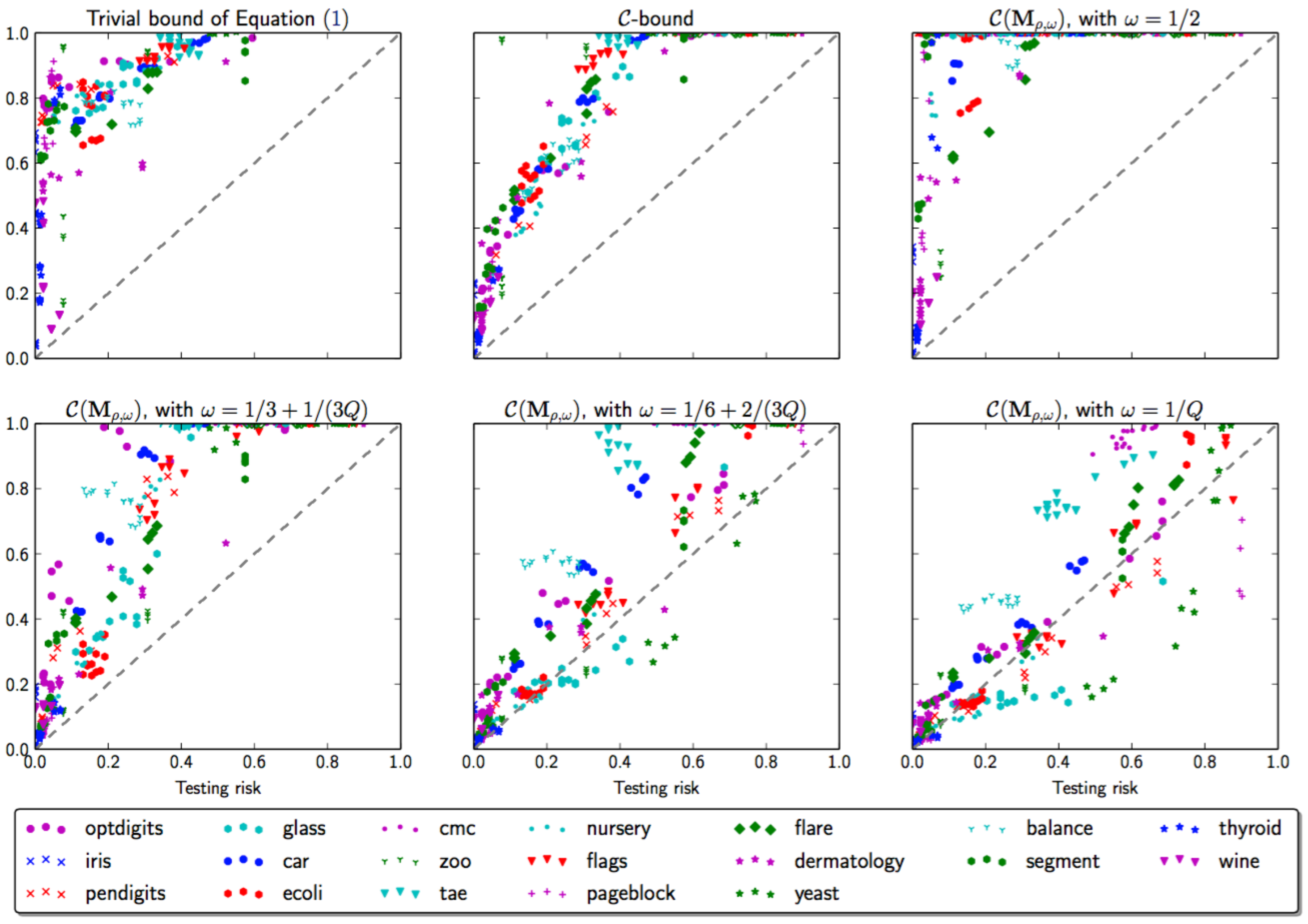}
	\caption{Comparison of the \emph{true} risk of the \mbox{$\posterior$-weighted} majority with: the trivial bound of Equation~\eqref{eq:gibbs},  the \Cbound, and ${\cal C}(\OQ)$ for various values of $\omega$. All the values was calculated on a testing set disjoint from the one  used to learn~$\posterior$.  
	}
	\label{fig:cbound_correlations}
\end{figure}

\begin{table}[t]\small
	\caption{\label{tab:pearson}Pearson correlations with the testing risk of the majority vote.}
	\begin{center}
		\rowcolors{5}{}{black!10}
		\begin{tabular}{lc}
			\toprule
			Quantity (evaluated on set $T$) & Pearson correlation with $\textbf{R}_T(\BQ)$\\
			\cmidrule(l r){1-1} \cmidrule(l r){2-2}
			Trivial bound of Equation~\eqref{eq:gibbs}  &  $0.7525$ \\
			${\cal C}(\MQ)$, the Multiclass (the \Cbound of Theorem~\ref{theo:cborne_generale}) & $0.8791$\\
			${\cal C}(\OQ)$  with $\omega=1/2$ (The bound of Corollary~\ref{cor:marge_omega}) & $0.5688$\\
			${\cal C}(\OQ)$ with $\omega=1/3 + 1/(3Q)$ (not a bound)& $0.8838$\\
			${\cal C}(\OQ)$ with $\omega=1/6 + 2/(3Q)$ (not a bound)& $\mathbf{0.9090}$\\
			${\cal C}(\OQ)$ with $\omega=1/Q$ (not a bound)& $0.8741$\\
			\bottomrule
		\end{tabular}
	\end{center}
\end{table}

The binary \Cbound is known to be well-suited to characterize the behavior of the risk of the \mbox{$\posterior$-weighted} majority vote, as their respective values are correlated~\citep{Lacasse07}. We extend this analysis by empirically evaluating the behavior of the multiclass \Cbounds introduced above on natural data. 
We generate multiclass \mbox{$\posterior$-weighted} majority votes by running a multiclass version of AdaBoost~\cite{fs-97}---known as AdaBoost-SAMME\footnote{We  use of the implementation 
	provided in the Scikit-Learn Python library~\cite{scikit-learn}.}~\cite{zhu2009multi}---on multiclass datasets from the UCI dataset repository~\cite{uci-98}. We split each dataset in two halves: a training set $S$ and a testing set $T$. We train the algorithm on $S$, using $100$, $250$, $500$ and $1,000$ decision trees of depth $2$, $3$, $4$ and $5$ as base voters, for a total of sixteen majority votes per dataset. The reported values are all computed on the testing set. Figure~\ref{fig:cbound_correlations} shows how the values of different upper bounds relate with the risk of the  majority vote, and how the choice of $\omega$ for various values of ${\cal C}(\OQ)$  (it is not an upper bound for $\omega  <  \frac12$) affects the correlation with the risk. We finalize this correlation study by reporting, in Table~\ref{tab:pearson}, the Pearson product-moment correlation coefficients for all computed values.

As pointed out in the paper, we notice from Figure~\ref{fig:cbound_correlations} and Table~\ref{tab:pearson} that 
for some values $\omega$, the values of ${\cal C}(\OQ)$ 
are very correlated with the risk of the majority vote. 
Unfortunately, the only one that is an upper bound of the latter (\mbox{$\omega=\frac12$}) does not show the same predictive power.
Thus, these results also gives some empirical evidence that a wise choice of $\omega$ can improve the correlation between the \Cbound based on the \mbox{$\omega$-margin} and  the risk of the vote.

These experiments confirm the usefulness of the \Cbounds based on a notion of margin to upper-bound the true risk of the {$\posterior$-weighted} majority vote. Taking into account the first and second statistical moments of such margins seems effectively very informative. This property is interesting in an algorithmic point of view: one could derive a multiclass optimization algorithm generalizing the algorithm MinCq \citep{MinCq} by minimizing ${\cal C}(\OQ)$ where
$\omega$ could be
 a hyperparameter to tune by cross-validation.

\section{Specializations of the \Cbound to Multilabel Prediction}
\label{sec:cborne_multilabel}

In this section, we instantiate the general \Cbound approach to multilabel classification.
We stand in the following setting,
where the space of possible labels is \mbox{$\{1,\ldots,Q\}$} with a finite number of classes $Q  \geq   2$, but we consider the \emph{multilabel} output space  \mbox{$\Ybf  =  \{0, 1\}^{\Q}$} that contains vectors  \mbox{$\ybf = (y_1,\ldots,y_{\Q})^\top$}. In other words we consider multiple binary labels.
Given an example  \mbox{$(\xbf,\ybf)  \in  \X  \times  \Ybf$}, the output vector  \mbox{$\ybf$}  is then defined as follows:
$$ \forall j \in  \{1,\ldots,Q\},\quad  y_j = \left\{\begin{array}{l}1\text{ if $\xbf$ is labeled with $j$}\\0 \text{ otherwise}.\end{array}\right.$$
In this specific case, we define the output feature map $Y(\cdot)$  such that the image of $\Ybf$ is $\imY=${\footnotesize$\left\{\frac{-1}{\sqrt{\Q}},\frac{1}{\sqrt{\Q}}\right\}^{\Q}$}, and: 
$$\displaystyle \forall j \in  \{1,\ldots,Q\},\quad Y_j(\ybf)= \left\{\begin{array}{l}\frac{1}{\sqrt{\Q}}\text{ if $y_j = 1$ {\small ($\xbf$ is labeled with $j$)}}\\\frac{-1}{\sqrt{\Q}}\text{ otherwise},\end{array}\right.$$
where $Y_j(\ybf)$ is the $j$-th coordinate of $Y(\ybf)$.
According to this definition, we have that:  $\forall\ybf  \in \Ybf,\ \|Y(\ybf)\|=1$.
The set $\Hcalbf$ is made of \emph{multilabel voters} $\hbf : \X \mapsto \hullimY$.  
In the light of  the feature output map $Y(\cdot)$, the definition of the majority vote classifier and the margin can respectively be rewritten as:
\begin{align*}
\nonumber \BQs(\xbf)  \ 
&= \ \argmax_{\cbf \in \Ybf}\,   \sum_{j=1}^{\Q}\, \gc{\xbf}{j}Y_j(\ybf)  \,,\\[-2mm]
\text{and}\quad \MQs(\xbf,\ybf) \ &=  \ \sum_{j=1}^{\Q}\gc{\xbf}{j}Y_j(\ybf) - \max_{\substack{\cbf\in\Ybf\\\cbf\neq \ybf}}\,  \sum_{j=1}^{\Q}\gc{\xbf}{j}Y_j(\cbf)\,,
\end{align*}
where $\hbf_{j}$ is the $j$-th coordinate of $\hbf(\xbf)$.

\begin{figure}[t]
	\centering
\includegraphics[width=0.6\textwidth]{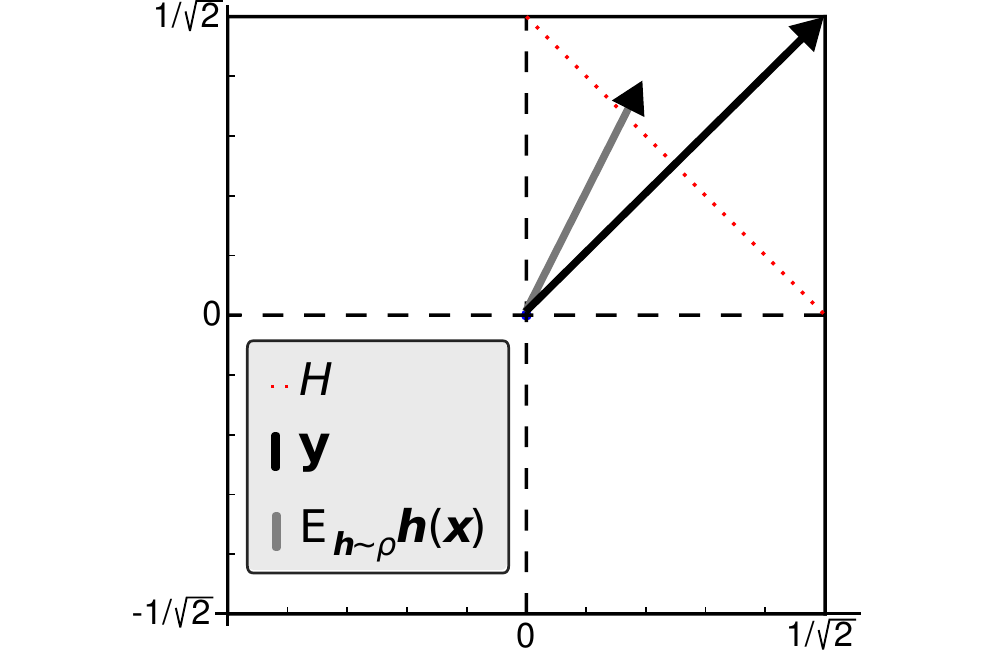}
\caption{ \label{fig:multilabel}Graphical representation of the \mbox{$\tfrac{\Q- 1}{\Q}$-margin} and the vote applied on an example $(\xb,\ybf)$ for multilabel classification 
when $Q  =  2$ and the true $\ybf$ is $(\fracq,\fracq)^\top $.  The angles of the cube corresponds to the different multilabels, that are: 
	$Y(\Ybf) = \{(\mfracq,\mfracq)^\top  ,(\fracq,\mfracq)^\top  ,(\mfracq,\fracq)^\top  ,(\fracq,\fracq)^\top \}$. Each line represents the decision boundary of a margin: the hyperplane where lies each example with a margin equals to $0$. A vote correctly classifies an example if it lies on the same side of the hyperplane than the correct class.}
\end{figure}

The next theorem relates the risk of $\BQ(\cdot)$ and the \mbox{$\omega$-margin} associated to $\posterior$.
\begin{theorem}
	\label{theo:omega_risk_multilabel}
	Let $Q\geq 2$ be the number of  labels. 
	For every distribution $D$ over $\X \times\Ybf$ and for every distribution $\posterior$ over a set of multilabel voters $\Hcalbf$, we have:
	$$
	\RP(\BQs)   \leq  \Prob_{\exbf\sim D}\left(\OQbof\exbf \leq 0\right)\,.
	$$
\end{theorem}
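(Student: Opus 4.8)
The plan is to follow the route of the right-hand inequality of Theorem~\ref{theo:link}: I will show that the $\posterior$-weighted majority vote correctly predicts the true label $\ybf$ on $\xbf$ whenever the ``weight of $\ybf$'', namely $\langle\Esp_{\hbf\sim\posterior}\hbf(\xbf),\,Y(\ybf)\rangle=\sum_{j=1}^{Q}\gc{\xbf}{j}\,Y_j(\ybf)$, exceeds the threshold $\tfrac{Q-1}{Q}$. In terms of events this is the pointwise inclusion $\{\OQbof\exbf>0\}\subseteq\{\MQs(\xbf,\ybf)>0\}$. Since $\RP(\BQs)=\Prob_{\exbf\sim D}(\MQs(\xbf,\ybf)\leq0)$ by Equation~\eqref{eq:risk_marge} and $\OQbof\exbf=\langle\Esp_{\hbf\sim\posterior}\hbf(\xbf),Y(\ybf)\rangle-\tfrac{Q-1}{Q}$ by Definition~\ref{def:omega_marge_generique}, the theorem then follows by monotonicity of $\Prob$ applied to the complementary inclusion $\{\MQs(\xbf,\ybf)\leq0\}\subseteq\{\OQbof\exbf\leq0\}$.

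To establish the inclusion I fix $\exbf$ and pass to coordinates. Since $\Esp_{\hbf\sim\posterior}\hbf(\xbf)\in\hullimY$ and $\imY$ consists of vectors whose every coordinate is $\pm\tfrac1{\sqrt{Q}}$, every coordinate $\gc{\xbf}{j}$ has absolute value at most $\tfrac1{\sqrt{Q}}$; and as every coordinate of $Y(\ybf)$ equals $\pm\tfrac1{\sqrt{Q}}$, the numbers $a_j\eqdef\gc{\xbf}{j}\,Y_j(\ybf)$ satisfy $|a_j|\leq\tfrac1Q$ and $\sum_{j=1}^{Q}a_j=\langle\Esp_{\hbf\sim\posterior}\hbf(\xbf),Y(\ybf)\rangle$. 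The crux is the elementary observation that $\sum_j a_j>\tfrac{Q-1}{Q}$ forces every $a_j$ to be strictly positive: were some $a_{j_0}\leq0$, the remaining $Q-1$ terms would contribute at most $\tfrac{Q-1}{Q}$, whence $\sum_j a_j\leq\tfrac{Q-1}{Q}$, contradicting the hypothesis.

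Granted that all $a_j>0$, I then take an arbitrary competing label $\cbf\in\Ybf$, $\cbf\neq\ybf$, and let $J\eqdef\{\,j:Y_j(\cbf)\neq Y_j(\ybf)\,\}$, which is non-empty. Because each coordinate of $Y(\cbf)$ and of $Y(\ybf)$ equals $\pm\tfrac1{\sqrt{Q}}$, we have $Y_j(\cbf)=-Y_j(\ybf)$ for $j\in J$ and $Y_j(\cbf)=Y_j(\ybf)$ for $j\notin J$, so $\sum_j\gc{\xbf}{j}\,Y_j(\cbf)=\sum_j a_j-2\sum_{j\in J}a_j<\sum_j a_j$ since $J\neq\emptyset$ and the $a_j$ are positive. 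Taking the maximum over $\cbf\neq\ybf$ and recalling the second form of the margin in Definition~\ref{def:notre_marge} yields $\MQs(\xbf,\ybf)>0$, which is exactly the desired implication.

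I do not anticipate a real obstacle. The only points needing care are the arithmetic bookkeeping of the normalization $\tfrac1{\sqrt{Q}}$, and stating the step ``some $a_{j_0}\leq0$ implies $\sum_j a_j\leq\tfrac{Q-1}{Q}$'' non-strictly so that it genuinely contradicts the strict hypothesis $\OQbof\exbf>0$. It is also worth a quick sanity check that $\tfrac{Q-1}{Q}$ is the largest admissible threshold, so that $\omega=\tfrac{Q-1}{Q}$ is the ``right'' choice: a mean vote vector whose $j_0$-th coordinate disagrees in sign with $Y_{j_0}(\ybf)$ lets $\langle\Esp_{\hbf\sim\posterior}\hbf(\xbf),Y(\ybf)\rangle$ reach $\tfrac{Q-1}{Q}$ while the vote is already incorrect, which also matches the geometry of the $\tfrac{Q-1}{Q}$-margin depicted in Figure~\ref{fig:multilabel} for $Q=2$.
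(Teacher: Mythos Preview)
Your proof is correct and reaches the same pointwise implication $\OQbof\exbf>0\Rightarrow\MQs\exbf>0$ that the paper establishes, but the route is genuinely different. The paper argues geometrically: it introduces the hyperplane $H_\ybf$ passing through the points $Y^{(j)}(\ybf)$ (the vectors obtained from $Y(\ybf)$ by zeroing one coordinate), asserts that any point of the hypercube on the $Y(\ybf)$-side of $H_\ybf$ is strictly closer to $Y(\ybf)$ than to any other vertex, and then computes the hyperplane equation to identify that half-space with $\{\OQbof>0\}$. Your argument is purely algebraic: you set $a_j=\gc{\xbf}{j}\,Y_j(\ybf)$, use the coordinate bound $|a_j|\leq\tfrac1Q$ together with $\sum_j a_j>\tfrac{Q-1}{Q}$ to force every $a_j>0$, and then compare inner products with competing labels via the sign-flip identity $\langle\Esp_\hbf\hbf(\xbf),Y(\cbf)\rangle=\sum_j a_j-2\sum_{j\in J}a_j$.

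What each buys: the paper's version gives a clean geometric picture that matches Figure~\ref{fig:multilabel} and explains \emph{why} the threshold is $\tfrac{Q-1}{Q}$ as the equation of a natural hyperplane, but it leaves the step ``closest vertex on that side'' as ``clearly''. Your version is more elementary and self-contained; it never invokes hyperplane geometry and in fact supplies precisely the coordinate computation that justifies the paper's ``clearly''. Your closing remark on tightness (that a single disagreeing coordinate lets $\langle\Esp_\hbf\hbf(\xbf),Y(\ybf)\rangle$ reach $\tfrac{Q-1}{Q}$ while the vote is already wrong) is a nice complement to both proofs.
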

\begin{proof}
	We have to show: $$
			\Prob_{\exbf\sim D}\left(\MQ\exbf \leq 0\right) \leq \Prob_{\exbf\sim D}\left(\OQbof\exbf \leq 0\right)\,.$$
	To do so we will prove that: $$\OQbof\exbf> 0\Longrightarrow \MQ\exbf> 0\,.$$

	Recall that $\hullimY$ is a hypercube whose vertices are exactly the $Y(\cbf)$'s with $\cbf  \in   \Ybf$. Given a vertex $Y(\ybf)$, denote $H_\ybf$ the hyperplane who passes through all the points $Y^{(j)}(\ybf)$, where $Y^{(j)}(\ybf)$ is the point of the hypercube that has exactly the same coordinates as $Y(\ybf)$, except the $j^{\mbox{\tiny th}}$ that has been put to $0$.
	
	Now, consider the region $R_\ybf$ of the hypercube $\hullimY$ that consists of all the points that correspond to \mbox{\small$\OQbof\exbf >  0$}, that is, the points that are on the same side of hyperplane $H_\ybf$ than $Y(\ybf)$. Clearly, for any $Q\geq 2$, 
	$Y(\ybf)$ is strictly closer to the point \mbox{\small$\Esp_{\hbf\sim\posterior}\hbf(\xbf)$} than any other $Y(\cbf)$'s if the vector \mbox{\small$\Esp_{\hbf\sim\posterior}\hbf(\xbf)$} lies in $R_\ybf$. This in turn implies that the margin $\MQ\exbf$ is strictly positive. Figure~\ref{fig:multilabel} shows an example with $Q=2$ and \mbox{\small$Y(\ybf) = (\frac{1}{\sqrt{2}},\frac{1}{\sqrt{2}})$}, where $H_\ybf$ is represented by a red dotted line, and $R_\ybf$ is the region delimited by the top-right corner and $H_\ybf$.
	
	To finish the proof, we have to show that  $R_\ybf$ is exactly the region where \mbox{\small$\OQbof\exbf >  0$}. Equivalently, we must show that the intersection of $H_\ybf$ and the hypercube $\hullimY$ is exactly the points for which \mbox{\small$\OQbof\exbf =  0$}, \emph{i.e.,} the vectors \mbox{\small$\Esp_{\hbf\sim\posterior}\hbf(\xbf)$} for which \mbox{\small$\langle \Esp_{\hbf\sim\posterior}\hbf(\xbf)\,,\, \ybf \rangle  - \frac{\Q- 1}{\Q}  =  0$}. We know from basic linear algebra  that the points $P$ that lie on hyperplane $H_\ybf$ must satisfy the following equation: $(P-P_0)\cdot N = 0$, where N is the normal of the hyperplane and $P_0$ is any point in $P$. It is easy to see that $Y(\ybf)$ is the normal of $H$ and that we can take  \mbox{\small$P_0 = Y^{(1)}(\ybf)$}. Hence, the equation becomes \mbox{\small$(P-Y^{(1)}(\ybf))\cdot Y(\ybf) = 0$}.
	
	Since all coordinates of $Y(\ybf)$ are either \mbox{\small$\frac{1}{\sqrt{\Q}}$} or  \mbox{\small$\frac{-1}{\sqrt{\Q}}$}, and all coordinates of $Y^{(1)}(\ybf)$ are the same as the ones of $Y(\ybf)$ except the first one being $0$ in $Y^{(1)}(\ybf)$, we have that \mbox{\small $Y^{(1)}(\ybf) \cdot Y(\ybf)  =  \frac{\Q- 1}{\Q}$}. The result then follows from
	\begin{equation*}
		(P-Y^{(1)}(\ybf))\cdot Y(\ybf) \ =\ P\cdot Y(\ybf)  -  Y^{(1)}(\ybf) \cdot Y(\ybf)\ =\  \left\langle P\,,\, Y(\ybf) \right\rangle  - \frac{Q- 1}{Q}\,.
	\end{equation*}
\end{proof}

Finally, according to the same arguments as in Corollary~\ref{cor:marge_omega}, we have:
\begin{cor}[$\omega$-margin multilabel \Cbound]
	\label{cor:marge_omega2_multilabel}
	For every probability distribution $\posterior$ on a set of multilabel voters $\Hcalbf$, for every distribution $D$ on \mbox{$\X  \times   \Ybf$},  if  \mbox{\small$\ \esp{\exbf\sim D} \OQbofs\exbf > 0$}, we have:
$$
		\RP(\BQ)\leq\ {\cal C}(\OQbof) \ =\ 1 - \frac{\left(\displaystyle\esp{\exbf\sim D} \OQbofs\exbf\right)^2}{\displaystyle\esp{\ex\sim D} \left(\OQbofs\exbf\right)^2}\,.$$
\end{cor}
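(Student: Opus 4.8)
The plan is to follow the proof of Corollary~\ref{cor:marge_omega} essentially verbatim, with Theorem~\ref{theo:omega_risk_multilabel} now playing the role that Theorem~\ref{theo:link} plays in the multiclass case. The starting point is the inclusion of events already established in Theorem~\ref{theo:omega_risk_multilabel}: its proof shows $\OQbof\exbf>0\Rightarrow\MQ\exbf>0$, hence $\{\MQ\exbf\le0\}\subseteq\{\OQbofs\exbf\le0\}$. Combining this with Equation~\eqref{eq:risk_marge} gives
$$\RP(\BQs)\ =\ \Prob_{\exbf\sim D}\left(\MQ\exbf\le0\right)\ \le\ \Prob_{\exbf\sim D}\left(\OQbofs\exbf\le0\right)\,,$$
so it suffices to upper-bound the right-hand probability.

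To do so I would introduce the random variable $Z\eqdef\OQbofs\exbf$ over $\exbf\sim D$ and set $a\eqdef\Esp_{\exbf\sim D}\OQbofs\exbf$, which is strictly positive by the hypothesis of the corollary. Then $\Prob(Z\le0)=\Prob(Z\le\Esp Z-a)$, and the Cantelli--Chebyshev inequality recalled in the proof of Theorem~\ref{theo:cborne_binaire} yields $\Prob(Z\le\Esp Z-a)\le\tfrac{\Var Z}{\Var Z+a^2}$. Substituting $a=\Esp Z$ and expanding $\Var Z=\Esp[Z^2]-(\Esp Z)^2$ turns this into $1-\tfrac{(\Esp Z)^2}{\Esp[Z^2]}$, which is exactly ${\cal C}(\OQbof)$ as defined in Equation~\eqref{eq:la_C_omega_pas_tjrs_borne} with $\omega=\tfrac{Q-1}{Q}$. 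Chaining the two inequalities closes the proof.

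I expect no genuine obstacle here: the only nontrivial ingredient is the geometric fact that the $\tfrac{Q-1}{Q}$-hyperplane $H_\ybf$ separates the vertex $Y(\ybf)$ from every other vertex of the hypercube $\hullimY$, so that a positive $\tfrac{Q-1}{Q}$-margin forces a positive true margin, and this is precisely the content of Theorem~\ref{theo:omega_risk_multilabel}, which I am free to invoke. The single point deserving a line of care is verifying that $a>0$ is exactly the condition under which Cantelli--Chebyshev applies with $a=\Esp Z$, which is why the statement carries the assumption $\esp{\exbf\sim D}\OQbofs\exbf>0$; the remaining variance bookkeeping is identical to that in Theorems~\ref{theo:cborne_binaire} and~\ref{theo:cborne_generale}.
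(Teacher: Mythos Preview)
Your proposal is correct and matches the paper's approach: the paper simply states that the corollary follows ``according to the same arguments as in Corollary~\ref{cor:marge_omega}'', and what you have written is precisely those arguments spelled out, with Theorem~\ref{theo:omega_risk_multilabel} replacing Theorem~\ref{theo:link} and Cantelli--Chebyshev applied to $Z=\OQbofs\exbf$ exactly as in Theorems~\ref{theo:cborne_binaire} and~\ref{theo:cborne_generale}.
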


\section{Conclusion}
\label{sec:conclusion}
\label{sec:conclu}

In PAC-Bayesian binary classification, it is well-known that the \Cbound offers a tight bound over the risk of the \mbox{$\posterior$-weighted} majority vote by taking into account the first two statistical moments of its margin. Moreover, from a practical standpoint, minimizing the \Cbound leads to a well-performing algorithm called MinCq \cite{MinCq}. This paper fills the gap between this binary classification theory and more complex tasks by generalizing the \Cbound for majority vote over complex output voters, and by proposing a new surrogate of the margin easier to manipulate. Note that, as future work, we would like to study how tuning this surrogate margin would result in a precise estimation of the risk. In order to justify the empirical estimation of the \Cbound from a sample, we provide a PAC-Bayesian generalization bound.
Moreover, we show how to specialize our  result to multiclass  and multilabel
classification.
Concretely, we think that the theoretical \Cbounds provided here are a first step towards developing ensemble methods to learn \mbox{$\posterior$-weighted} majority vote  for complex outputs through the minimization of a \Cbound, or of a surrogate of it.
A first solution for deriving such a method could be to study the general weak learning conditions necessary and sufficient to define an ensemble of structured output voters, as done by Mukherjee and Shapire~\cite{mukherjee2013theory} for multiclass boosting.

\appendix
\section*{Appendix}

\section{The Bounds Requiered to Prove Theorem~\ref{thm:pacbound}}
\label{sec:supp}

\begin{theorem}\label{thm:borne1}
	For any distribution $D$ on \mbox{$\X\!\times\! \Ybf$}, for any set~$\Hcalbf$ of voters from $\X$ to $\hullimY$,  
	for any prior distribution $\prior$ on $\Hcalbf$ and any $\delta\!\in\! (0,1]$, with a probability at least $1\!-\!\delta$ over the choice of the $m$-sample $S\!\sim\!(D)^m$, for every posterior distribution $\posterior$ over $\Hcalbf$ we have :
	\begin{equation*}
		\Esp_{(\xbf,\ybf)\in D} \MQs(\xbf,\ybf)  \,\, \geq \,\, 
		\frac1m     \sum_{(\xbf,\ybf)\in S}     \MQs(\xbf,\ybf)   -  \sqrt{  \frac{2B}{m}\LB \KL(\posterior \| \prior)  +  \ln\frac{2\sqrt{m}}{\delta}\RB} \,,
	\end{equation*}
	where $B\!\in\, (0,2]$  bounds the absolute value of the margin $|\MQ(\xbf,\ybf)|$ for all $(\xbf,\ybf)$, 
	and \mbox{\small$\KL(\posterior\|\prior) \!=\! \esp{\hbf\sim \posterior}\ln\frac{\posterior(\hbf)}{\prior(\hbf)}$} is the Kullback-Leibler divergence between $\posterior$ and $\prior$. 
\end{theorem}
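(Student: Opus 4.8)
The plan is to apply a standard PAC-Bayesian theorem — e.g.\ the general bound of McAllester — to the loss function given by the margin itself, suitably shifted and rescaled. The key observation is that the margin $\MQs(\xbf,\ybf)$ is \emph{linear} in $\Esp_{\hbf\sim\posterior}\hbf(\xbf)$ (see the second form in Equation~\eqref{eq:def_marge}, where the $\max$ over $\cbf$ sits inside only the definition of the single competing class and does not involve $\posterior$; more carefully, the map $\cbf^\star$ achieving the max depends on $\posterior$, but for a \emph{fixed} competitor the expression is linear). To make the PAC-Bayesian machinery apply cleanly, I would first fix, for each example $(\xbf,\ybf)$, the ``runner-up'' class and treat the per-voter quantity $\bigl\langle \hbf(\xbf), Y(\ybf)\bigr\rangle - \bigl\langle \hbf(\xbf), Y(\cbf^\star)\bigr\rangle$ as the individual voter's ``score''; then $\MQs(\xbf,\ybf) = \Esp_{\hbf\sim\posterior}\bigl[\text{score}_\hbf(\xbf,\ybf)\bigr]$ is exactly a $\posterior$-average of a per-voter quantity, which is the form PAC-Bayes needs. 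Since $\|Y(\ybf)\|=1$ and $\hbf(\xbf)\in\hullimY$ with all vertices unit-norm, each score lies in $[-B,B]$ with $B\le 2$, so after the affine rescaling $z\mapsto (B - z)/(2B) \in [0,1]$ we are in the bounded-loss regime.

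The main steps, in order, are: (1)~Define the bounded loss $\ell_\hbf(\xbf,\ybf) = \tfrac{1}{2B}\bigl(B - [\langle\hbf(\xbf),Y(\ybf)\rangle - \langle\hbf(\xbf),Y(\cbf^\star)\rangle]\bigr) \in [0,1]$, whose $\posterior$-Gibbs expectation over $D$ equals $\tfrac{1}{2B}(B - \Esp_D\MQs)$ and whose empirical counterpart equals $\tfrac{1}{2B}(B - \tfrac1m\sum_S\MQs)$. (2)~Invoke the classical PAC-Bayesian bound (McAllester-style, with the $\sqrt{(\KL + \ln(2\sqrt m/\delta))/(2m)}$ form, or equivalently a Pinsker relaxation of the Seeger/Langford $\kl$-bound) applied to this loss, obtaining with probability $\ge 1-\delta$, simultaneously for all $\posterior$, a bound of the form $\Esp_D\ell_\posterior \le \widehat{\Esp}_S\ell_\posterior + \sqrt{(\KL(\posterior\|\prior) + \ln\tfrac{2\sqrt m}{\delta})/(2m)}$. (3)~Translate back through the affine map: multiply by $2B$ and flip the inequality, which turns the $+$ into a $-$ and produces the $\sqrt{2B/m\,[\KL + \ln(2\sqrt m/\delta)]}$ term in the statement. (4)~Deal with the fact that $\cbf^\star$ depends on $(\xbf,\ybf)$ and, a priori, on $\posterior$: the cleanest fix is to note that $\MQs(\xbf,\ybf) = \min_{\cbf\ne\ybf}\bigl[\langle\Esp_\posterior\hbf(\xbf),Y(\ybf)\rangle - \langle\Esp_\posterior\hbf(\xbf),Y(\cbf)\rangle\bigr]$ is a minimum of finitely many $\posterior$-linear functionals, so one runs the PAC-Bayes argument for the family indexed by the (finitely many) choices of competitor and takes the minimum — the empirical side only gets smaller, and the true side is exactly $\MQs$; alternatively, since $Y$ has finite image and $\Ybf$ is finite, a single union bound over competitor classes absorbed into $\ln\tfrac{2\sqrt m}{\delta}$ would also work, but the former keeps the constant clean.

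I expect the main obstacle to be precisely this dependence of the runner-up class on $\posterior$: the margin is a pointwise $\min$ over competitors, not a single linear functional, so one cannot directly write $\MQs$ as $\Esp_{\hbf\sim\posterior}[g(\hbf)]$ for a fixed $g$. Resolving it requires either the ``min of PAC-Bayes bounds over a finite competitor set'' trick sketched above, or observing that on the favourable event the true $\MQs$ is a lower bound of each per-competitor linear functional so the bound for the \emph{true} competitor class suffices — but then one must be careful that the competitor identity is a property of $D$ and $\posterior$, not of the sample, so the bound is still uniform in $\posterior$. Everything else — the boundedness constant $B\le 2$, the affine reparametrisation, the exact form of the complexity term — is routine once the correct PAC-Bayesian template (with the $\ln\tfrac{2\sqrt m}{\delta}$ constant coming from bounding the moment-generating function / the $2\sqrt m$ factor in Maurer's or Seeger's lemma) is selected. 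The companion bound for $\bigl(\MQs\bigr)^2$ (Theorem~\ref{thm:borne2}) will follow the same recipe with $B^2$ in place of $B$ and a factor $2$ on the $\KL$, consistent with the statement of Theorem~\ref{thm:pacbound}.
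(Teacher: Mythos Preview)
Your steps (1)--(3) are exactly the paper's argument: affinely rescale the margin to a $[0,1]$-valued loss, apply Markov's inequality to $\Esp_{\hbf\sim\prior}\exp\bigl[\tfrac{m}{2B}(\Mbf_\hbf^S-\Mbf_\hbf^D)^2\bigr]$, bound the $S$-expectation via Pinsker's inequality and Maurer's $2\sqrt m$ lemma, then change measure (Donsker--Varadhan) and apply Jensen to pull the square outside. At that level your plan matches the paper line for line, including the specific constants that produce the $\sqrt{\tfrac{2B}{m}[\KL+\ln\tfrac{2\sqrt m}{\delta}]}$ term.

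Your step~(4) is where things get interesting, and here you are \emph{more} explicit than the paper about a genuine obstruction. The paper handles the passage from $\MQs$ to a per-voter quantity by simply declaring ``we can rewrite $\Esp_{(\xbf,\ybf)\sim D'}\MQprimes(\xbf,\ybf)$ as an expectation over $\posterior'$'' and introducing an unspecified random variable $\Mbf_\hbf^{D'}$; it never says what $\Mbf_\hbf^{D'}$ is nor why it is $\posterior$-independent. As you correctly note, since the runner-up $\cbf^\star$ depends on $\posterior$, the map $\posterior\mapsto\Esp_{D'}\MQs$ is concave rather than linear, so no $\posterior$-free $\Mbf_\hbf^{D'}$ can satisfy $\Esp_{\hbf\sim\posterior}\Mbf_\hbf^{D'}=\Esp_{D'}\MQs$ for \emph{all} $\posterior$ simultaneously. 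That said, your own fixes do not close the gap either. In the ``min over finitely many $\posterior$-linear functionals'' trick, making the true-distribution side equal $\Esp_D\MQs$ forces you to pick the competitor \emph{function} $(\xbf,\ybf)\mapsto\cbf^\star_\posterior(\xbf,\ybf)$ that achieves the pointwise minimum under $D$; this function depends on $\posterior$, so the per-voter loss you feed into the change-of-measure step is itself $\posterior$-dependent --- precisely what that step forbids. Your alternative union bound over the $Q-1$ competitor \emph{labels} is insufficient because the competitor varies per example, so the relevant family is the set of competitor \emph{functions}, which is not finite when $\Xbf$ is infinite. In short: your template is the paper's template, you are right to worry about linearity, and the argument as written (both yours and the paper's) is clean only for linear surrogates such as the $\omega$-margin, where this obstruction vanishes.
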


\begin{proof}The following proof shows how to obtain the lower bound on the first moment of $\MQs\exbf$, and uses the same notions as the classical PAC-Bayesian proofs.\footnote{The reader can refer to~\citep{gllm-09,seeger2003pac,catoni2007pac,mcallester2003simplified} for examples of classical PAC-Bayesian analyses.}

Given a distribution $D'$ on \mbox{$\X  \times  \Ybf$},  for any distribution $\posterior'$ over $\Hcalbf$, we can rewrite \mbox{$\Esp_{(\xbf, \ybf) \sim D'} \MQprimes(\xbf, \ybf)$} as an expectation over $\posterior'$. We denote $\Mbf_\hbf^{D'}$ the random variable such that \mbox{$\Esp_{\hbf\sim \posterior'} \Mbf_\hbf^{D'} = \Esp_{(\xbf, \ybf) \sim D'} \MQprimes(\xbf, \ybf)$}. 
	
	First, we have that $\Esp_{\hbf \sim \prior} \exp\left[\frac{m}{2B}\left(\Mbf_\hbf^S - \Mbf_\hbf^D\right)^2\right]$ is a non-negative random variable. Applying Markov's inequality yields that with probability at least $1 - \delta$ over the choice of $S \sim (D)^m$, we have:
	\begin{equation}
	\label{eq:inequality2}
	\Esp_{\hbf \sim \prior} \exp\left[\frac{m}{2B}\left(\Mbf_\hbf^S - \Mbf_\hbf^D\right)^2\right]\ \leq\ \frac{1}{\delta} \Esp_{S \sim D^m} \Esp_{\hbf \sim \prior} \exp\left[\frac{m}{2B}\left(\Mbf_\hbf^S - \Mbf_\hbf^D\right)^2\right]\,.
	\end{equation}
	We upper-bound the right-hand side of the inequality:
		\begin{align}
\!\!	\Esp_{S \sim D^m}\, \Esp_{\hbf \sim \prior}\, &\exp\!\left[\frac{m}{2B}\left(\Mbf_\hbf^S - \Mbf_\hbf^D\right)^2\right]
		\label{line:expectations2}  = \Esp_{\hbf \sim \prior}\, \Esp_{S \sim D^m} \exp\!\left[\frac{m}{2B}\left(\Mbf_\hbf^S - \Mbf_\hbf^D\right)^2\right]\\
		\nonumber & =\ \Esp_{\hbf \sim \prior} \,\Esp_{S \sim D^m} \exp\!\left[m \, 2\left({ \frac{1}{2}\Big(1 -  \frac1B\Mbf_\hbf^S\Big) - \frac{1}{2}\Big(1 - \frac1B\Mbf_\hbf^D\Big)}\right)^2\right]\\
		\label{line:pinsker2} & \leq\ \Esp_{\hbf \sim \prior}\, \Esp_{S \sim D^m} \exp\!\left[m \, \kl\left({\frac{1}{2}\Big(1 -  \frac{\Mbf_\hbf^S}{B}\Big)}\middle\|{\frac{1}{2}\Big(1 - \frac{\Mbf_\hbf^D}{B}\Big)}\right)\right]\\
		\label{line:maurer2} & \leq\ \Esp_{\hbf \sim \prior} 2 \sqrt{m}\ =\ 2\sqrt{m}\,.
		\end{align}%
	Line~\eqref{line:expectations2} comes from the fact that the distribution $\prior$ is defined a priori. Since $B$ is an upper bound of the possible absolute values of the margin, both $\frac{1}{2}\Big(1 -  \frac{\Mbf_\hbf^S}{B}\Big)$ and $\frac{1}{2}\Big(1 -  \frac{\Mbf_\hbf^D}{B}\Big)$ are between $0$ and $1$. Thus  Line~\eqref{line:pinsker2} is an application of Pinsker's inequality $2(q-p)^2 \leq \kl(q\|p) = q\ln\frac{q}{p} +(1-q)\ln\frac{1-q}{1-p}$. Finally, Line~\eqref{line:maurer2} is an application of \mbox{\citet{m-04} (Theorem 5.)}.
	
	By applying this upper bound in Inequality~\eqref{eq:inequality2} and by taking the logarithm on each side, with probability at least $1 - \delta$ over the choice of $S \sim D^m$, we have:
	\begin{equation*}
		\ln \left(\Esp_{\hbf \sim \prior} \exp\left[\frac{m}{2B}\left(\Mbf_\hbf^S - \Mbf_\hbf^D\right)^2\right]\right)\ \leq\ \ln \left( \frac{2\sqrt{m}}{\delta} \right).
	\end{equation*}
	Now, by applying the change of measure inequality proposed by \mbox{\citet{seldin-tishby-10} (Lemma 4.)} with \mbox{$\phi(\hbf) = \frac{m}{2B}\left(\Mbf_\hbf^S - \Mbf_\hbf^D\right)^2$}, and by using Jensen's inequality exploiting the convexity of $\phi(\hbf)$, we obtain that for all distributions $\posterior$ on $\Hcalbf$:
	\begin{align*}
	 \ln \left(\Esp_{\hbf \sim \prior} \exp\left[\frac{m}{2B}\left(\Mbf_\hbf^S - \Mbf_\hbf^D\right)^2\right]\right) \ &\geq\ \Esp_{\hbf\sim \posterior} \frac{m}{2B}\left(\Mbf_\hbf^S - \Mbf_\hbf^D\right)^2 - \KL(\posterior\|\prior)\\
	&\geq\ \frac{m}{2B}\left(\Esp_{\hbf\sim \posterior} \Mbf_\hbf^S - \Esp_{\hbf\sim \posterior} \Mbf_\hbf^D\right)^2 - \KL(\posterior\|\prior)\,.
	\end{align*}	
	From all what precedes, we have that with probability at least $1 - \delta$ over the choice of $S \sim (D)^m$, for every posterior distribution $\posterior$ on $\Hcalbf$, we have:
	\begin{align*}
	\frac{m}{2B}\left(\Esp_{(\xbf, \ybf) \sim S}\MQs(\xbf, \ybf) -    \Esp_{(\xbf, \ybf) \sim D}\MQs(\xbf, \ybf)\right)^2     - \KL(\posterior\|\prior)\ \leq\ \ln \left( \frac{2\sqrt{m}}{\delta} \right).
	\end{align*}
	The result follows from algebraic calculations.
\end{proof}

\begin{theorem}\label{thm:borne2}
	For any distribution $D$ on \mbox{$\X\!\times\! \Ybf$}, for any set~$\Hcalbf$ of voters from $\X$ to $\hullimY$,  
	for any prior distribution $\prior$ on $\Hcalbf$ and any $\delta\!\in\! (0,1]$, with a probability at least $1\!-\!\delta$ over the choice of the $m$-sample $S\!\sim\!(D)^m$, for every posterior distribution $\posterior$ over $\Hcalbf$ we have :
	\begin{equation*}
		\Esp_{(\xbf,\ybf)\in S}       \left( \MQs(\xbf,\ybf)\right)^2  \,\, \leq \,\, 
		\frac1m      \sum_{(\xbf,\ybf)\in S}       \left( \MQs(\xbf,\ybf)\right)^2     +  \sqrt{  \frac{2B^2}{m}  \LB 2\KL(\posterior || \prior)  +  \ln\frac{2\sqrt{m}}{\delta}\RB} \,,
	\end{equation*}
	where $B\!\in\, (0,2]$  bounds the absolute value of the margin $|\MQ(\xbf,\ybf)|$ for all $(\xbf,\ybf)$, 
	and \mbox{\small$\KL(\posterior\|\prior) \!=\! \esp{\hbf\sim \posterior}\ln\frac{\posterior(\hbf)}{\prior(\hbf)}$} is the Kullback-Leibler divergence between $\posterior$ and $\prior$. 
\end{theorem}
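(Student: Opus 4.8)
The plan is to follow the template of the proof of Theorem~\ref{thm:borne1}; the only genuinely new difficulty is that the second moment of the margin is the $D$-average of a \emph{squared} $\posterior$-expectation and hence is not of the form $\Esp_{\hbf\sim\posterior}(\cdot)$, so it cannot be handed directly to the PAC-Bayesian machinery. I would linearise it exactly as \citet{Lacasse07} do for the variance of the Gibbs classifier, namely through an independent pair of voters. First fix the posterior $\posterior$ and, for each example $(\xbf,\ybf)$, choose a runner-up class $\cbf^{*}(\xbf,\ybf)\in\argmax_{\cbf\neq\ybf}\langle\Esp_{\hbf\sim\posterior}\hbf(\xbf),Y(\cbf)\rangle$ and put $g_{\hbf}(\xbf,\ybf)=\langle\hbf(\xbf),Y(\ybf)-Y(\cbf^{*}(\xbf,\ybf))\rangle$, so that Definition~\ref{def:notre_marge} together with linearity of the inner product gives $\MQs(\xbf,\ybf)=\Esp_{\hbf\sim\posterior}g_{\hbf}(\xbf,\ybf)$ on every example. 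Drawing an independent copy $\hbf'\sim\posterior$ then yields $(\MQs(\xbf,\ybf))^{2}=\Esp_{(\hbf,\hbf')\sim\posterior\otimes\posterior}[\,g_{\hbf}(\xbf,\ybf)\,g_{\hbf'}(\xbf,\ybf)\,]$. For a distribution $D'$ on $\X\times\Ybf$ I would set $\Mbf_{(\hbf,\hbf')}^{D'}=\Esp_{(\xbf,\ybf)\sim D'}[\,g_{\hbf}(\xbf,\ybf)\,g_{\hbf'}(\xbf,\ybf)\,]$; then $\Esp_{(\hbf,\hbf')\sim\posterior\otimes\posterior}\Mbf_{(\hbf,\hbf')}^{S}=\frac1m\sum_{(\xbf,\ybf)\in S}(\MQs(\xbf,\ybf))^{2}$ and $\Esp_{(\hbf,\hbf')\sim\posterior\otimes\posterior}\Mbf_{(\hbf,\hbf')}^{D}=\Esp_{(\xbf,\ybf)\sim D}(\MQs(\xbf,\ybf))^{2}$, and each factor $g_{\hbf}(\xbf,\ybf)$ lies in $[-B,B]$ (with $B=2$ always admissible, as in the proof of Theorem~\ref{thm:pacbound}), so $|\Mbf_{(\hbf,\hbf')}^{D'}|\leq B^{2}$.

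With this reduction the rest is the usual PAC-Bayesian argument carried out on the product space of voter pairs. The quantity $\Esp_{(\hbf,\hbf')\sim\prior\otimes\prior}\exp[\frac{m}{2B^{2}}(\Mbf_{(\hbf,\hbf')}^{S}-\Mbf_{(\hbf,\hbf')}^{D})^{2}]$ is nonnegative, so Markov's inequality bounds it, with probability at least $1-\delta$ over $S\sim D^{m}$, by $\frac1\delta$ times its $S$-expectation; swapping the (data-independent) prior and the sample expectations and fixing a pair $(\hbf,\hbf')$, the variable $\Mbf_{(\hbf,\hbf')}^{S}$ is the empirical mean of $m$ i.i.d.\ copies of a $[-B^{2},B^{2}]$-valued random variable with mean $\Mbf_{(\hbf,\hbf')}^{D}$, so after affinely rescaling into $[0,1]$, Pinsker's inequality followed by Maurer's bound (\citet{m-04}, Theorem~5) gives $\Esp_{S\sim D^{m}}\exp[\frac{m}{2B^{2}}(\Mbf_{(\hbf,\hbf')}^{S}-\Mbf_{(\hbf,\hbf')}^{D})^{2}]\leq 2\sqrt{m}$ uniformly in $(\hbf,\hbf')$. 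Hence with probability at least $1-\delta$ the exponential moment under $\prior\otimes\prior$ is at most $2\sqrt{m}/\delta$ — entirely parallel to Theorem~\ref{thm:borne1}.

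Finally I would take logarithms, apply the change-of-measure inequality of \citet{seldin-tishby-10} to pass from $\prior\otimes\prior$ to $\posterior\otimes\posterior$, and use Jensen's inequality (convexity of $x\mapsto x^{2}$) to pull the pair-expectation inside the square; since $\KL(\posterior\otimes\posterior\,\|\,\prior\otimes\prior)=2\,\KL(\posterior\|\prior)$ — which is precisely where the factor $2$ in front of the divergence originates — this produces $\frac{m}{2B^{2}}\big(\frac1m\sum_{(\xbf,\ybf)\in S}(\MQs(\xbf,\ybf))^{2}-\Esp_{(\xbf,\ybf)\sim D}(\MQs(\xbf,\ybf))^{2}\big)^{2}\leq 2\,\KL(\posterior\|\prior)+\ln\frac{2\sqrt{m}}{\delta}$, and isolating the true second moment while keeping the upper branch of the square root gives the claim. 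The step I expect to demand the most care is the linearisation and its bookkeeping: one must check that replacing the $\max$ by a fixed runner-up class $\cbf^{*}(\xbf,\ybf)$ is \emph{exact} on every example (it is, being the $\argmax$) and arrange the per-pair variable $\Mbf_{(\hbf,\hbf')}^{D'}$ so that the Markov step under $\prior$ remains legitimate for all posteriors at once, just as is implicitly done in Theorem~\ref{thm:borne1}; the concentration and change-of-measure parts are then routine.
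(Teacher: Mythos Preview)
Your proposal follows essentially the same route as the paper: linearise the squared margin as an expectation over an independent pair of voters, apply Markov under the prior $\prior\otimes\prior$, swap expectations, use Pinsker and Maurer to get the $2\sqrt{m}$ bound, then change measure (your observation $\KL(\posterior\otimes\posterior\|\prior\otimes\prior)=2\KL(\posterior\|\prior)$ is exactly the content of the paper's Lemma~\ref{lem:change_of_measure_2}) and finish with Jensen. The paper simply \emph{names} the per-pair variable $\Mbf_{\hbf,\hbf'}^{D'}$ abstractly rather than constructing it via a runner-up class, but the skeleton is identical.

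The one point you flag yourself is the real one: your $\cbf^{*}(\xbf,\ybf)$ depends on the posterior $\posterior$, so the random variable to which Markov is applied under $\prior\otimes\prior$ is itself $\posterior$-dependent, and the high-probability event is not \emph{a priori} uniform in $\posterior$. The paper does not resolve this either---it leaves $\Mbf_{\hbf,\hbf'}^{D'}$ implicit and never confronts how a single, $\posterior$-free pairwise variable can reproduce $(\MQs)^2$ when the margin involves a $\max$---so you are not missing anything relative to the paper's own argument; you have simply made the delicate step visible.
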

\begin{proof}
This proof uses many notions that are usual in classical PAC-Bayesian proofs, but the expectation over single voters is replaced with an expectation over pairs of voters. Given a distribution $D'$ on \mbox{$\X \! \times\!  \Ybf$},  for any distribution $\posterior'^2$ over $\Hcalbf$, we rewrite \mbox{\small$\Esp_{(\xbf, \ybf) \sim D'} (\MQprimes(\xbf, \ybf))^2$} as an expectation over $\posterior'^2$. 
Let $\Mbf_{\hbf,\hbf'}^{D'}$ be the r.v. such that \mbox{\small$\Esp_{(\hbf,\hbf')\sim \posterior'^2} \Mbf_{\hbf,\hbf'}^{D'}\! =\! \Esp_{(\xbf, \ybf) \sim D'} (\MQprimes(\xbf, \ybf))^2$}. 
First, we apply the Markov's inequality on the non-negative r.v. 
\mbox{\small$\Esp_{(\hbf,\hbf') \sim \prior^2} \exp	\!\left[\frac{m}{2B^2}\left(\Mbf_{\hbf,\hbf'}^S\!\! -\! \Mbf_{\hbf,\hbf'}^D\right)^2\right]$}. 
Thus, we have 
that with probability at least $1\! -\! \delta$ over the choice of $S\! \sim\! (D)^m$: 
{\small	
	\begin{align}
		\label{eq:inequality}
		\!\!\Esp_{(\hbf,\hbf') \sim \prior^2} \!\!\!\!\!\exp\!\left[\frac{m}{2B^2}	\!\left(\Mbf_{\hbf,\hbf'}^S\!\! -\! \Mbf_{\hbf,\hbf'}^D\!\right)^2\!\right]
		\! \leq\! \frac{1}{\delta} \Esp_{S\sim(D)^m} \Esp_{(\hbf,\hbf') \sim \prior^2}\!\!\!\! \!\exp\!\left[\frac{m}{2B^2}\!\left(\Mbf_{\hbf,\hbf'}^S\!\! -\! \Mbf_{\hbf,\hbf'}^D\!\right)^2\!\right].	\!\!
	\end{align}}%
	Then, we upper-bound the right-hand side of the inequality:
	\begin{small}
		\allowdisplaybreaks[1]
		\begin{align}
			\nonumber
			\Esp_{S \sim D^m}\, &\Esp_{(\hbf,\hbf') \sim \prior^2}\, \exp\!\left[\frac{m}{2B^2}\left(\Mbf_{\hbf,\hbf'}^S - \Mbf_{\hbf,\hbf'}^D\right)^2\right]\\
			\label{line:expectations} & =\ \Esp_{(\hbf,\hbf') \sim \prior^2}\, \Esp_{S \sim D^m} \exp\!\left[\frac{m}{2B^2}\left(\Mbf_{\hbf,\hbf'}^S - \Mbf_{\hbf,\hbf'}^D\right)^2\right]\\
			\nonumber & =\ \Esp_{(\hbf,\hbf') \sim \prior^2} \,\Esp_{S \sim D^m} \exp\!\left[m \, 2\left({ \frac{1}{2}\Big(1 -  \frac1{B^2}\Mbf_{\hbf,\hbf'}^S\Big) - \frac{1}{2}\Big(1 - \frac1{B^2}\Mbf_{\hbf,\hbf'}^D\Big)}\right)^2\right]\\
			\label{line:pinsker} & \leq\ \Esp_{(\hbf,\hbf') \sim \prior^2}\, \Esp_{S \sim D^m} \exp\!\left[m \, \kl\left({\frac{1}{2}\Bigg(1 -  \frac{\Mbf_{\hbf,\hbf'}^S}{B^2}\Bigg)}\middle\|{\frac{1}{2}\Bigg(1 - \frac{\Mbf_{\hbf,\hbf'}^D}{B^2}\Bigg)}\right)\right]\\
			\label{line:maurer} & \leq\ \Esp_{(\hbf,\hbf') \sim \prior^2} 2 \sqrt{m}\ =\ 2\sqrt{m}\,.
		\end{align}%
	\end{small}%
	Line~\ref{line:expectations} comes from the fact that the distribution $\prior$ is defined {\it a priori}, {\it i.e.}, before observing $S$. Since $B$ upper-bounds the absolute value of the margin, both \mbox{\small$\frac{1}{2}\Big(1\! -\!  \frac{\Mbf_{\hbf,\hbf'}^S}{B^2}\Big)$} and \mbox{\small$\frac{1}{2}\Big(1\! - \! \frac{\Mbf_{\hbf,\hbf'}^D}{B^2}\Big)$} lie between $0$ and $1$. Line~\ref{line:pinsker} is then an application of Pinsker's inequality\footnote{The Pinkser inequality is: \mbox{$2(q-p)^2\! \leq\! \kl(q\|p)\! =\! q\ln\frac{q}{p}\! +\!(1-q)\ln\frac{1-q}{1-p}$}}. Finally, Line~\ref{line:maurer} is an application of \mbox{\citet{m-04} (Theorem 5)}, which is stated to be valid for $m\geq 8$, but is also valid for any $m \geq 1$.
	
	\noindent
	By applying this upper bound in Inequality~\eqref{eq:inequality} and by taking the logarithm on each side, with probability at least $1 - \delta$ over the choice of $S \sim (D)^m$, we have:
$$
		\ln \left(\Esp_{(\hbf,\hbf') \sim \prior^2} \exp\left[\frac{m}{2B^2}\left(\Mbf_{\hbf,\hbf'}^S - \Mbf_{\hbf,\hbf'}^D\right)^2\right]\right)\ \leq\ \ln \left( \frac{2\sqrt{m}}{\delta} \right).$$
	
	Now, we need the change of measure inequality\footnote{The change of measure is an important step in most PAC-Bayesian proofs~\cite{seldin-tishby-10}.} of Lemma~\ref{lem:change_of_measure_2} (stated below) that has the novelty to use pairs of voters. By applying this lemma with \mbox{\small$\phi(\hbf,\hbf') = \frac{m}{2B^2}\left(\Mbf_{\hbf,\hbf'}^S\! -\! \Mbf_{\hbf,\hbf'}^D\right)^2$}, and by using Jensen's inequality exploiting the convexity of $\phi(\hbf,\hbf')$, we obtain that for all distributions $\posterior$ on $\Hcalbf$:%
	\begin{small}
		\begin{align*}
			\ln\! \left(\!\Esp_{(\hbf,\hbf') \sim \prior^2}\!\!\!\! \exp\!\left[\frac{m}{2B^2}\!\!\left(\Mbf_{\hbf,\hbf'}^S\! -\! \Mbf_{\hbf,\hbf'}^D\right)^2\right]\right) 
			\geq \Esp_{(\hbf,\hbf')\sim \posterior^2} \frac{m}{2B^2}\!\left(\Mbf_{\hbf,\hbf'}^S\! -\! \Mbf_{\hbf,\hbf'}^D\right)^2\!\! -\! 2 \KL(\posterior\|\prior)&\\
			\geq \frac{m}{2B^2}\!\left(\Esp_{(\hbf,\hbf')\sim \posterior^2} \Mbf_{\hbf,\hbf'}^S -\!\! \Esp_{(\hbf,\hbf')\sim \posterior^2} \Mbf_{\hbf,\hbf'}^D\right)^2 \!\!-\! 2 \KL(\posterior\|\prior)&\,.
		\end{align*}%
	\end{small}%
	From all what precedes, with probability at least $1-\delta$ on the choice of $S\! \sim\! (D)^m$, for every posterior distribution $\posterior$ on $\Hcalbf$, we have:
	\begin{align*}
		\frac{m}{2B^2}\left(\Esp_{(\xbf, \ybf) \sim S}(\MQs(\xbf, \ybf))^2 -    \Esp_{(\xbf, \ybf) \sim D}(\MQs(\xbf, \ybf))^2\right)^2     - 2\KL(\posterior\|\prior)\ \leq\ \ln \left( \frac{2\sqrt{m}}{\delta} \right).
	\end{align*}
	The result follows from algebraic calculations.
\end{proof}

The change of measure used in the previous proof is stated below.
\begin{lemma}[Change of measure inequality for pairs of voters]
	\label{lem:change_of_measure_2}
	For any set of voters $\Hcalbf$, for any distributions $\prior$ and $\posterior$ on $\Hcalbf$, and for any measurable function $\phi : \Hcalbf\! \times\! \Hcalbf \mapsto \R$, we have:$$
		\displaystyle \ln\left(\Esp_{(\hbf,\hbf')\sim \prior^2} \exp\big[\phi(\hbf,\hbf')\big]\right)\ \geq\ \Esp_{(\hbf,\hbf')\sim \posterior^2} \phi(\hbf,\hbf') - 2\KL(\prior\|\posterior)\,.$$
\end{lemma}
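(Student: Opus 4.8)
The plan is to deduce the pairs inequality from the ordinary (single‑distribution) Donsker--Varadhan change of measure inequality, applied not on $\Hcalbf$ but on the product space $\Hcalbf\times\Hcalbf$ equipped with the product measures, and then to simplify the resulting divergence term using the tensorization (additivity) property of the Kullback--Leibler divergence. Recall the classical statement: for any two probability measures $P,Q$ on a common measurable space and any measurable $\psi$ (with the usual conventions ruling out $\infty-\infty$),
\begin{equation*}
\Esp_{Q}\,\psi\ \leq\ \KL(Q\|P)\ +\ \ln\Esp_{P}\exp[\psi]\,,
\end{equation*}
which follows in one line from Jensen's inequality — equivalently, from the non‑negativity of the KL divergence between $Q$ and the tilted measure $dR\propto e^{\psi}\,dP$. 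This is exactly the ingredient invoked in the single‑voter PAC‑Bayesian arguments cited elsewhere in the paper.

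First I would observe that, since $\posterior$ and $\prior$ are probability distributions on $\Hcalbf$, the objects $\posterior^2$ and $\prior^2$ used here are simply the product measures $\posterior\otimes\posterior$ and $\prior\otimes\prior$ on $\Hcalbf\times\Hcalbf$. Applying the displayed inequality with $P=\prior\otimes\prior$, $Q=\posterior\otimes\posterior$ and $\psi=\phi$ yields
\begin{equation*}
\Esp_{(\hbf,\hbf')\sim\posterior^2}\phi(\hbf,\hbf')\ \leq\ \KL(\posterior\otimes\posterior \,\|\, \prior\otimes\prior)\ +\ \ln\Esp_{(\hbf,\hbf')\sim\prior^2}\exp[\phi(\hbf,\hbf')]\,,
\end{equation*}
so that after rearrangement the lemma follows once the divergence term is identified. (The factor $2$ in the statement, and the direction of the KL term used inside the proof of Theorem~\ref{thm:borne2}, come precisely from this identification.)

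The only substantive step is therefore the tensorization identity $\KL(\posterior\otimes\posterior \,\|\, \prior\otimes\prior)=2\,\KL(\posterior\|\prior)$. When $\posterior\ll\prior$ this holds because the Radon--Nikodym derivative of a product factorizes, $\frac{d(\posterior\otimes\posterior)}{d(\prior\otimes\prior)}(\hbf,\hbf')=\frac{d\posterior}{d\prior}(\hbf)\,\frac{d\posterior}{d\prior}(\hbf')$, so taking the logarithm and integrating against $\posterior\otimes\posterior$ splits into two identical copies of $\KL(\posterior\|\prior)$; for the countable voter sets used in the paper this is just the elementary computation $\sum_{\hbf,\hbf'}\posterior(\hbf)\posterior(\hbf')\ln\frac{\posterior(\hbf)\posterior(\hbf')}{\prior(\hbf)\prior(\hbf')}=2\sum_{\hbf}\posterior(\hbf)\ln\frac{\posterior(\hbf)}{\prior(\hbf)}$. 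If $\posterior\not\ll\prior$ the right‑hand side of the asserted inequality is $-\infty$ and there is nothing to prove. I expect no real obstacle here; the only point deserving a word of care is integrability of $\phi$, but in the application to Theorem~\ref{thm:borne2} the function $\phi$ is a non‑negative multiple of a square — hence bounded below, and in fact bounded, since the margin is bounded by $B$ — so $\Esp_{\posterior^2}\phi$ is well defined and finite and the manipulations above are legitimate.
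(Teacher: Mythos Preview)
Your proposal is correct and follows essentially the same route as the paper: apply the single-distribution change of measure (Donsker--Varadhan) on the product space $\Hcalbf\times\Hcalbf$ with $P=\prior^2$, $Q=\posterior^2$, and then use the tensorization identity $\KL(\posterior^2\|\prior^2)=2\,\KL(\posterior\|\prior)$. The only cosmetic difference is that the paper re-derives the Donsker--Varadhan step explicitly via the multiply--divide trick and Jensen's inequality on $\ln(\cdot)$, whereas you invoke it as a known inequality; and you correctly note that the KL term should read $\KL(\posterior\|\prior)$ (as in the paper's proof and in Theorem~\ref{thm:borne2}) rather than $\KL(\prior\|\posterior)$ as printed in the lemma statement.
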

\begin{proof}
	The proof is very similar to the one of \mbox{\citet{seldin-tishby-10} (Lemma 4.)}, but is defined using pairs of voters.
	The first inequality below is given by using Jensen's inequality on the concave function~$\ln(\cdot)$.
		\begin{align*}
			\Esp_{(\hbf,\hbf')\sim \posterior^2} \phi(\hbf,\hbf') &= \Esp_{(\hbf,\hbf')\sim \posterior^2} \ln\!\left( \e{\phi(\hbf,\hbf')}\right) 
			= \Esp_{(\hbf,\hbf')\sim \posterior^2} \ln\! \left(\e{\phi(\hbf,\hbf')}\,\frac{\posterior^2(\hbf, \hbf')}{\prior^2(\hbf,\hbf')}  \frac{\prior^2(\hbf,\hbf')}{\posterior^2(\hbf,\hbf')}   \right) \\
			&\quad =\ \KL(\posterior^2\|\prior^2)  \, +\,  \Esp_{(\hbf,\hbf')\sim \posterior^2}  \ln\! \left( \e{\phi(\hbf,\hbf')}\, \frac{\prior^2(\hbf,\hbf')}{\posterior^2(\hbf,\hbf')}   \right)\\
			&\quad \leq\ \KL(\posterior^2\|\prior^2)\,+\, \ln\! \left(\Esp_{(\hbf,\hbf')\sim \posterior^2} \e{\phi(\hbf,\hbf')}\,\frac{\prior^2(\hbf,\hbf')}{\posterior^2(\hbf,\hbf')}   \right) \\
			&\quad \leq\ \KL(\posterior^2\|\prior^2) + \ln\! \left( \Esp_{(\hbf,\hbf')\sim \prior^2} \e{\phi(\hbf,\hbf')} \right)\\
			&\quad =\ 2\KL(\posterior\|\prior) + \ln\! \left( \Esp_{(\hbf,\hbf')\sim \prior^2} \e{\phi(\hbf,\hbf')} \right).
		\end{align*}
	Note that the last inequality becomes an equality if $\posterior$ and $\prior$ share the same support. The last equality comes from the definition of the KL-divergence, and from the fact that $\prior^2(\hbf, \hbf') = \prior(\hbf)\prior(\hbf')$ and $\posterior^2(\hbf, \hbf') = \posterior(\hbf)\posterior(\hbf')$.
\end{proof}

\bibliography{biblio_abbregee}
\bibliographystyle{icml2011}

\end{document}